\lstdefinestyle{prompttxt}{
  basicstyle=\ttfamily\small,
  columns=fullflexible,
  breaklines=true,
  breakatwhitespace=false,
  keepspaces=true,
  showstringspaces=false,
  tabsize=2
}
\definecolor{lightgray}{gray}{0.93}
\newtcolorbox{promptbox}[1]{
  enhanced,
  breakable,
  colback=gray!1,
  colframe=black!40,
  boxrule=0.6pt,
  arc=2mm,
  outer arc=2mm,
  left=6pt, right=6pt, top=6pt, bottom=6pt,
  title={#1},
  coltitle=white,
  fonttitle=\bfseries,
  colbacktitle=black!80,
  boxed title style={arc=1mm, outer arc=1mm},
  attach boxed title to top left={yshift=-2mm, xshift=2mm},
  drop shadow
}
\newcommand{\promptboxfromfile}[2]{%
  \begin{promptbox}{#1}
    \lstinputlisting[style=prompttxt]{#2}
  \end{promptbox}%
}
\begin{document}

\title{AlphaOPT: Formulating Optimization Programs with Self-Improving LLM Experience Library}

\author{Minwei Kong}
\authornote{Equal contribution.}
\affiliation{%
  \institution{Singapore-MIT Alliance for Research and Technology}
  \country{Singapore}
}
\email{minwei.kong@smart.mit.edu}

\author{Ao Qu}
\authornotemark[1]
\authornote{Corresponding author.}
\affiliation{%
  \institution{Massachusetts Institute of Technology}
  \country{Cambridge, MA, United States}
}
\email{qua@mit.edu}

\author{Xiaotong Guo}
\affiliation{%
  \institution{Massachusetts Institute of Technology}
  \country{Cambridge, MA, United States}
}
\email{guoxiaotong95@gmail.com}

\author{Wenbin Ouyang}
\affiliation{%
  \institution{Massachusetts Institute of Technology}
  \country{Cambridge, MA, United States}
}
\email{oywenbin@mit.edu}

\author{Chonghe Jiang}
\affiliation{%
  \institution{Massachusetts Institute of Technology}
  \country{Cambridge, MA, United States}
}
\email{chonghej@mit.edu}

\author{Han Zheng}
\affiliation{%
  \institution{Massachusetts Institute of Technology}
  \country{Cambridge, MA, United States}
}
\email{hanzheng@mit.edu}

\author{Yining Ma}
\email{yiningma@mit.edu}
\affiliation{%
  \institution{Massachusetts Institute of Technology}
  \country{Cambridge, MA, United States}
}

\author{Dingyi Zhuang}
\affiliation{%
  \institution{Massachusetts Institute of Technology}
  \country{Cambridge, MA, United States}
}
\email{dingyi@mit.edu}

\author{Yuhan Tang}
\affiliation{%
  \institution{Massachusetts Institute of Technology}
  \country{Cambridge, MA, United States}
}
\email{yhtang@mit.edu}

\author{Junyi Li}
\affiliation{%
  \institution{Singapore-MIT Alliance for Research and Technology}
  \country{Singapore}
}
\email{junyi.li@smart.mit.edu}

\author{Shenhao Wang}
\affiliation{%
  \institution{University of Florida}
  \country{Gainesville, FL, USA}
}
\email{shenhaowang@ufl.edu}

\author{Haris Koutsopoulos}
\affiliation{%
  \institution{Northeastern University}
  \country{Boston, MA, USA}
}
\email{h.koutsopoulos@northeastern.edu}

\author{Hai Wang}
\affiliation{%
  \institution{Singapore Management University}
  \country{Singapore}
}
\email{haiwang@smu.edu.sg}

\author{Cathy Wu}
\affiliation{%
  \institution{Massachusetts Institute of Technology}
  \country{Cambridge, MA, United States}
}
\email{cathywu@mit.edu}

\author{Jinhua Zhao}
\affiliation{%
  \institution{Massachusetts Institute of Technology}
  \country{Cambridge, MA, United States}
}
\email{jinhua@mit.edu}

\renewcommand{\shortauthors}{Kong et al.}

\begin{abstract}
Optimization modeling underlies critical decision-making across industries, yet remains difficult to automate: natural-language problem descriptions must be translated into precise mathematical formulations and executable solver code. Existing LLM-based approaches typically rely on brittle prompting or costly retraining, both of which offer limited generalization. Recent work suggests that large models can improve via experience reuse, but how to systematically acquire, refine, and reuse such experience in structurally constrained settings remains unclear. We present \textbf{AlphaOPT}, a self-improving experience library that enables LLMs to learn optimization modeling knowledge from limited supervision, including answer-only feedback without gold-standard programs, annotated reasoning traces, or parameter updates. AlphaOPT operates in a continual two-phase cycle: a \emph{Library Learning} phase that extracts solver-verified, structured insights from failed attempts, and a \emph{Library Evolution} phase that refines the applicability of stored insights based on aggregate evidence across tasks. This design allows the model to accumulate reusable modeling principles, improve transfer across problem instances, and maintain bounded library growth over time. Evaluated on multiple optimization benchmarks, AlphaOPT steadily improves as more training data become available (65\% $\rightarrow$ 72\% from 100 to 300 training items) and outperforms the strongest baseline by 9.1\% and 8.2\% on two out-of-distribution datasets. These results demonstrate that structured experience learning, grounded in solver feedback, provides a practical alternative to retraining for complex reasoning tasks requiring precise formulation and execution. 
AlphaOPT code and data are available at \href{https://github.com/Minw913/AlphaOPT}{https://github.com/Minw913/AlphaOPT}.
\end{abstract}

\begin{CCSXML}
<ccs2012>
   <concept>
       <concept_id>10010405.10010481</concept_id>
       <concept_desc>Applied computing~Operations research</concept_desc>
       <concept_significance>500</concept_significance>
       </concept>
   <concept>
       <concept_id>10010147.10010178</concept_id>
       <concept_desc>Computing methodologies~Artificial intelligence</concept_desc>
       <concept_significance>500</concept_significance>
       </concept>
 </ccs2012>
\end{CCSXML}

\ccsdesc[500]{Applied computing~Operations research}
\ccsdesc[500]{Computing methodologies~Artificial intelligence}

\keywords{Large Language Models, Optimization Problems, Experience Learning}

\maketitle

\section{Introduction}

Optimization models support critical decision-making in finance, manufacturing, marketing, transportation, and logistics \citep{ahmaditeshnizi2024optimus,bertsimas1997introduction,NL4OPT}. Beyond improving efficiency, automating the optimization workflow lowers the barrier to operations research expertise in industry and enables non-experts to prototype faster, iterate on formulations, and deploy solver-backed decisions at scale. Yet this process has long been challenging, because informal and often ambiguous specifications must be mapped to precise, domain-specific formulations and paired with appropriate code and solvers. This creates major bottlenecks for end-to-end automation \citep{JiangShu2025llmopt}.

Advances in large language models (LLMs) make this vision increasingly feasible: They can parse natural language requirements \citep{ouyang2022training}, generate executable programs \citep{nijkampcodegen,jimenez2024swebench}, and orchestrate downstream tools \citep{DBLP:conf/iclr/QinLYZYLLCTQZHT24}. Two main lines of work have emerged. Prompt-based systems steer general LLMs with structured prompts and tool use \citep{xiao2023chain, thind2025optimai, ahmaditeshnizi2024optimus, zhang2025or}. Fine-tuning approaches adapt models on domain corpora and benchmarks \citep{huang2025orlm, yang2024optibench}. Despite this progress, both families face important limitations. Prompt-based systems improve only within the scope of their fixed prompt templates and are sensitive to small wording changes and domain shifts. Fine-tuned models, on the other hand, require retraining as new data arrives and, critically, most benchmarks and datasets in the community (e.g., NLP4LP \citep{ahmaditeshnizi2024optimus}, MAMO \citep{huang2024mamo}, IndustryOR \citep{huang2025orlm}) provide only final programs or solutions rather than the intermediate reasoning that governs modeling decisions, which limits the generalizability of fine-tuning approaches.

This motivates a new learning paradigm for optimization formulation: Instead of relying solely on prompts or retraining, LLMs should continually improve by accumulating, refining, and reusing solver-verified modeling insights. Indeed, recent approaches such as Reflexion~\citep{shinn2023reflexion}, STaR~\citep{zelikman2022star}, ExpeL~\citep{zhao2023expel}, and AlphaEvolve~\citep{novikov2025alphaevolve} demonstrate that large models can improve through experiential reuse, storing reflections, rationales, or code edits and applying them in new tasks. These methods have been effective in common reasoning tasks such as information retrieval~\citep{yang-etal-2018-hotpotqa} and web browsing~\citep{yao2022webshop}, but they face limitations for optimization problems. First, their experiences are largely preserved as free-form text or edits, without explicit applicability semantics or refinement mechanisms to represent, verify, and refine applicability-sensitive modeling knowledge; in optimization tasks, applying mismatched experiences can therefore have detrimental effects. Second, when generating experiences, these methods often lack rigorous replay-based validation, or rely on verifications limited to task-level outcomes (e.g., rewards, final answers, or test-case success), which does not guarantee that the underlying knowledge is structurally valid or transferable. Third, their experiences tend to be organized as a flat list of items, whereas operations research involves diverse insights across different modeling stages and problem types, inherently requiring a more efficient organization of acquired knowledge.

Regarding these challenges, we introduce \textbf{AlphaOPT}, an experience learning framework for optimization formulation that builds and continually refines a library of solver-verified modeling insights. In contrast to prior approaches that store experience as unstructured text or static edits, AlphaOPT formulates experience as structured, reusable insights with explicit applicability semantics. Each insight is defined as a four-tuple(\emph{taxonomy}, \emph{condition(applicability)}, \emph{explanation}, \emph{example}), specifying not only what modeling rule to reuse, but when and why it should be applied given a natural-language problem description. AlphaOPT improves performance through a continual two-phase cycle without updating model parameters. In the \emph{Library Learning} phase, the framework self-explores optimization tasks and autonomously constructs a hierarchical taxonomy to organize insights across modeling stages, while relying on solver verification to ensure correctness even in the absence of gold-standard programs. In the \emph{Library Evolution} phase, AlphaOPT diagnoses systematic misalignments between tasks and insights and refines applicability conditions based on aggregate evidence, preventing insights from being either overly narrow or overly general. Compared to existing experience learning methods, AlphaOPT introduces \textbf{three key advances}: (1) a self-updating hierarchical taxonomy for organizing insights; (2) solver-verified insight generation and updating; and (3) an aggregate-driven refinement mechanism that continuously improves the precision and generalizability of insight applicability.

We conduct quantitative experiments across multiple benchmarks and baselines. Compared with other approaches, the results show that AlphaOPT not only achieves state-of-the-art performance on multiple benchmark datasets without any gold-standard formulation or program, but also demonstrates strong generalization on out-of-distribution datasets, and continual performance growth with converging library size. These results demonstrate the efficacy of experience learning for optimization formulation, which paves the way toward more challenging settings, such as efficient program formulation and large-scale optimization. Compared to existing approaches, AlphaOPT makes the following contributions:
\begin{itemize}
\item We introduce the first experience learning framework for natural-language optimization modeling, which represents solver-verified modeling knowledge as condition-aware insights that are continually refined over time.
\item Our experiments show that AlphaOPT generalizes strongly beyond training distributions, achieving state-of-the-art performance on the LogiOR and OptiBench benchmarks.
\item We provide extensive qualitative and quantitative analyses of the learned experience library and its refinement dynamics, demonstrating explicit, interpretable, and human-inspectable knowledge transfer in operations research domains.
\end{itemize}

\begin{figure*}[t]
  \centering
  \includegraphics[width=\linewidth]{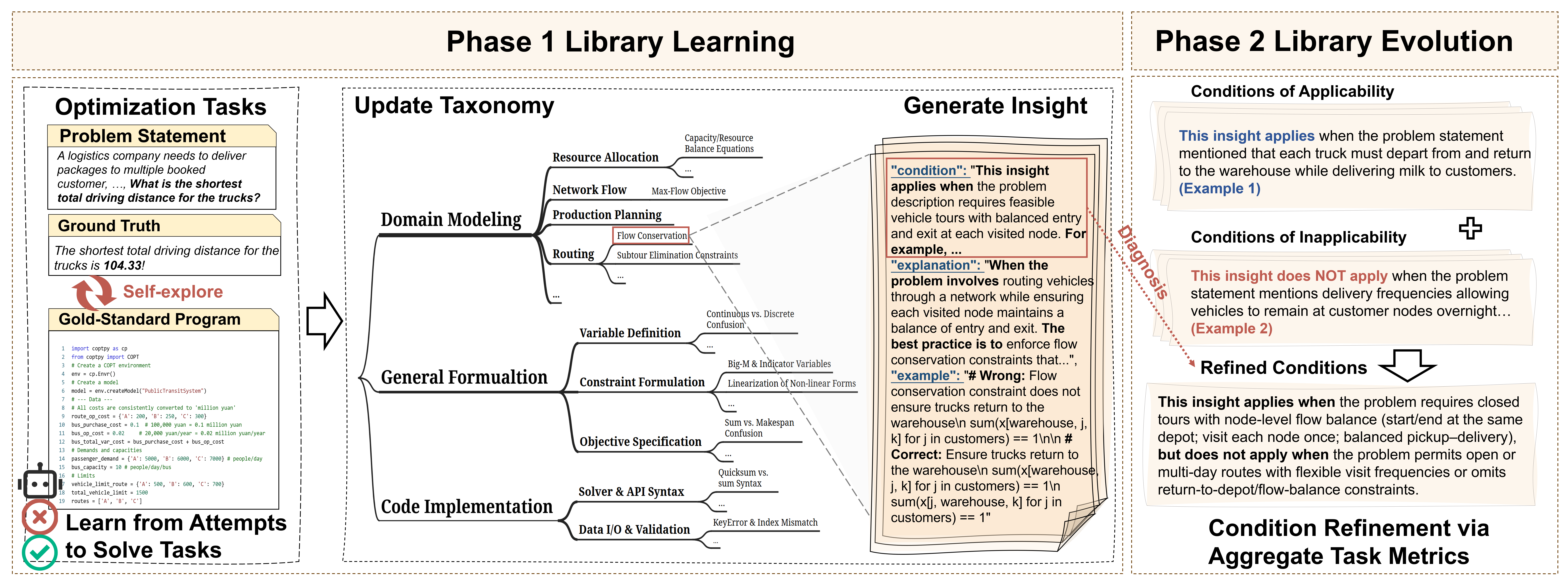}
  \caption{AlphaOPT iteratively learns an experience library through two phases: a Library Learning phase that extracts structured insights from failed attempts and organizes them in a hierarchical taxonomy, and a Library Evolution phase that refines insight applicability using evidence aggregated across all associated tasks to avoid over- or under-generalization.}
  \Description{The figure shows a two-phase process for learning and refining optimization knowledge. On the left, Phase 1 illustrates library learning. The process begins with an optimization task, including a natural language problem statement, a known correct answer, and a gold-standard mathematical program. The system attempts to solve the task, learns from both mistakes and successful solutions, and uses these attempts to extract reusable experience. This experience is organized into a structured taxonomy that covers domain modeling, general mathematical formulation, and code implementation. From this structure, the system generates concrete insights, each with a clear condition describing when it applies, an explanation of the modeling principle, and examples that contrast incorrect and correct formulations. On the right, Phase 2 illustrates library evolution. Previously generated insights are examined by comparing cases where they apply and where they do not. Using performance and error signals aggregated across tasks, the system refines the conditions under which each insight should be used. The result is a more precise description of applicability, for example distinguishing problems that require closed vehicle tours with flow balance from problems that allow open or multi-day routes. Overall, the figure shows how raw optimization attempts are transformed into structured, conditional knowledge and how that knowledge is iteratively refined to improve preciseness of the applicability and transfer across diverse problems.}
  \label{fig:framework}
\end{figure*}

\section{Related Work}
\subsection{LLMs for Solving Optimization Problems}
Recent work increasingly uses LLMs as autonomous agents for optimization, mainly following two paradigms: formulation and heuristic generation \citep{ye2024reevo, yang2025heuragenix, wu2025efficientheuristics}. 
LLM-driven formulation tasks aim to translate natural-language problem statements into solver-ready mathematical programs and executable solving code. 
Multiple benchmark datasets have been proposed to cover diverse problem families such as LP, MILP, and NLP \citep{xiao2023chain, ahmaditeshnizi2024optimus, yang2024optibench, yang2025orthought}. 

Existing approaches can be broadly grouped into prompt-based and training-based methods. Prompt-based methods use multi-step prompting and structured reasoning to guide LLMs in parsing problems, extracting variables/constraints, and synthesizing models. Representative examples include Chain-of-Experts \citep{xiao2023chain}, which leverages multi-role decomposition, and OptiMUS \citep{ahmaditeshnizi2024optimus}, which employs a self-reflect agentic pipeline with retrieval-augmented prompting to produce mathematical formulations. LEAN-LLM-OPT \citep{liang2026leanllmopt} further integrates a three-agent workflow with RAG and few-shot learning for problem typing, example retrieval, and data grounding. However, these methods lack sustained knowledge injection and long-term memory, which can lead to performance plateaus under distribution shift. Training-based methods improve robustness by building instruction corpora and fine-tuning open-source models. For example, ORLM \citep{huang2025orlm} synthesizes natural-language optimization problem data used for fine-tuning, and LLMOPT \citep{JiangShu2025llmopt} adopts a five-element schema with multi-instruction SFT on expert-labeled and augmented data. Recent post-training work further applies RL-style refinement, such as OR-R1 \citep{ding2025orr1}, to improve single-shot performance and robustness. Nonetheless, these pipelines rely heavily on high-quality, diverse data, and are often hard to interpret or audit.

Overall, despite progress in prompting, data synthesis, fine-tuning, and post-training, existing methods remain insufficient for reliable deployment and sustained improvement in real-world operations research systems.
\subsection{Decision-making Tasks with Experience Learning}
Experience Learning studies how LLM-based agents extract and reuse transferable structured experience from historical interactions to continuously improve downstream decision-making and execution \citep{zhao2023expel, shinn2023reflexion}. Such experience can take the form of episodic traces \citep{park2023generative} (e.g., tool/action sequences), semantic insights abstracted from successful or failed attempts \citep{zhao2023expel}. This line of work is primarily evaluated on long-horizon, interactive, feedback-rich decision-making tasks, including multi-tool planning and web interaction \citep{qin2023toolllm, zhou2023webarena},  repository-level program repair/code generation \citep{jimenez2023swebench}, and multi-step reasoning problem solving \citep{shinn2023reflexion, wang2024awm}.

\citet{shinn2023reflexion} propose Reflexion, which enables language agents to learn from trial-and-error by converting task feedback into verbal self-reflections that are stored as episodic memory and reused in subsequent attempts. ExpeL \citep{zhao2023expel} jointly extracts similar exemplars/trajectories together with their associated insights to guide task execution and further employs a voting-based mechanism to downweight mutually contradictory insights, thereby improving single-shot performance and cross-task generalization.
At a more symbolic end, \citet{zhu2023llmrules} explicitly induces a rule library and maintains it using quantitative filters such as coverage and confidence, enhancing controllability and robustness of experience reuse. AgentKB \citep{tang2025agentkb} and ReasoningBank \citep{ ouyang2025reasoningbank} move toward organizing an evolving reasoning memory to support self-improving agents across tasks.

However, current frameworks face systematic limitations in three key aspects. First, regarding experience applicability, most methods still rely on embedding-based similarity retrieval or coarse metadata to match experiences \citep{zhao2023expel, park2023generative, wang2024awm}, while lacking explicit, verifiable preconditions (when an experience applies, when it should not, and what assumptions it requires), which increases the risk of negative transfer under distribution shift. Second, for strict verification of experience generation, merging, and editing, existing mechanisms often depend on LLM self-assessment \citep{zhao2023expel, shinn2023reflexion, feng2025agentrr}; for instance, ExpeL's voting-based filtering can remove contradictions but does not guarantee semantic correctness. Finally, regarding cross-task generalization, existing experience libraries typically support experience accumulation and local pruning \citep{zhao2023expel, tang2025agentkb, ouyang2025reasoningbank}, but lack a systematic mechanism to assess whether an experience that helps on local source tasks may degrade performance on broader related task sets.

\section{Methodology}
\label{sec:method}
Building reliable LLM-based systems for solving optimization problems remains challenging. To begin with, gold-standard formulations and programs are scarce and may contain annotation errors~\citep{JiangShu2025llmopt, yang2025orthought}, while datasets with only answer-level supervision are underexploited~\citep{huang2024mamo,huang2025orlm,lu2025optmath}. Moreover, although optimization tasks arrive with diverse natural-language descriptions, they often share recurring modeling rules that activate under identifiable conditions; however, existing methods still struggle to learn such when-to-apply-what knowledge. Fine-tuned models~\citep{huang2025orlm, JiangShu2025llmopt} often overfit to syntax without mastering applicability, whereas prompt-based agent systems~\citep{ahmaditeshnizi2024optimus, xiao2023chain, yang2025orthought} rely on human empirical curation and lack the capacity to adapt or continually learn from larger datasets.

To this end, we propose \textbf{AlphaOPT}, which constructs an experience library of reusable insights with explicit applicability conditions, and then iteratively evolves these conditions at population level to improve generalization. As illustrated in Figure \ref{fig:framework}, it consists of two complementary phases that form a continual cycle of acquisition and refinement. The first phase, Library Learning, extracts structured insights from individual failed tasks under answer-only supervision while minimizing redundancy. The second phase, Library Evolution, diagnoses misalignments between insights and tasks and refines applicability conditions to enhance generalization while reducing confusion caused by overgeneralization. In Section \ref{sec:method:theory}, we provide a mathematical interpretation that frames library construction as maximizing task success with a size regularizer, with a convergence analysis provided in Appendix~\ref{sec:appendix:proof}.

To the best of our knowledge, our framework is the first to adapt experience learning to operations research (OR) with three key innovations: (1) structured knowledge for interpretability and auditability: Each insight is represented with taxonomy, condition, explanation, and example, which renders its applicability explicit, reviewable, and even revisable in practice; (2) solver-guided verifiability: If a program generated by LLM achieves the optimal objective under the solver, it is highly likely to be valid and can serve as a reliable anchor for extracting insights, which broadens the sources of experience collection. Newly generated insights are retested to ensure they are retrievable in the library and to succeed when replayed on its source task, which ensures that they are valid before integration; (3) refinement of experience applicability for generalizability and preciseness: Applicability conditions are refined using cross-task evidence, so insights neither over-generalize nor remain too narrow, which improves safe transfer across problem families.

\begin{figure*}[t]
  \centering
  \includegraphics[width=\textwidth]{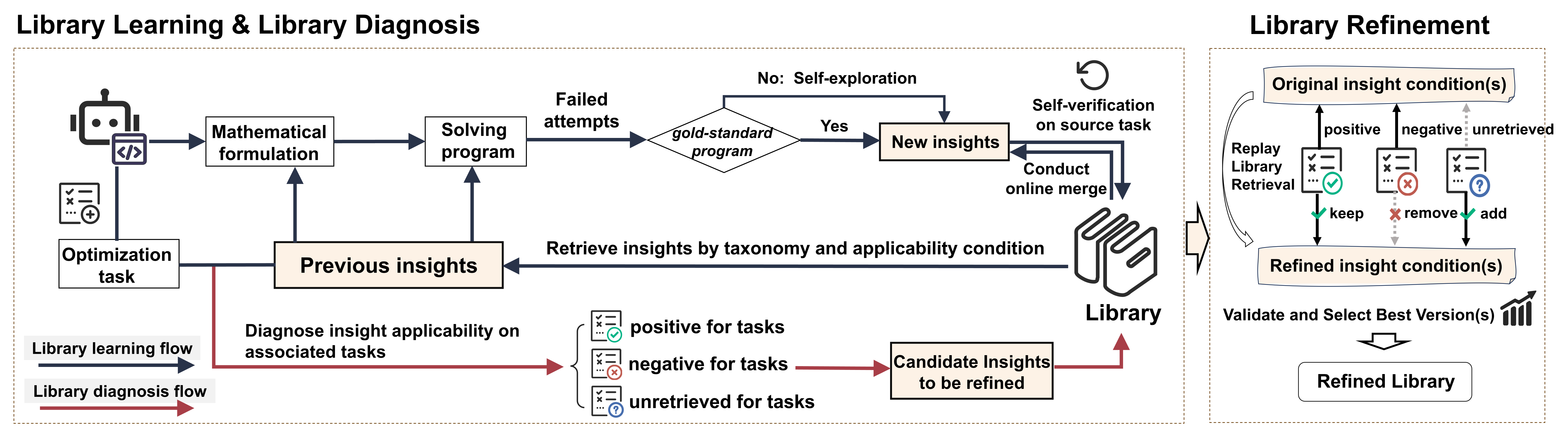}
  \caption{The overall workflow of AlphaOPT. The left panel depicts two complementary flows: library learning, which self-explores correct patterns and extracts new insights by trial and error; library diagnosis, which categorizes interactions between failed tasks and retrieved insights into positive, negative, and unretrieved cases. The right panel shows library refinement, whereby the LLM refines insight applicability, validates via retrieval replay, and reintegrates them into the library.}
  \Description{The figure illustrates how the insight library is learned, diagnosed, and refined through interaction with optimization tasks. On the left, the library learning and diagnosis process is shown. Given an optimization task, the system retrieves relevant prior insights based on a taxonomy and their applicability conditions. These insights guide the construction of a mathematical formulation, which is then translated into a solving program. If the program succeeds, the task provides positive evidence for the retrieved insights. If it fails, the system performs self-exploration by comparing the failed solution with a gold-standard program and extracting new insights. All insights, both existing and newly generated, are stored in the library. At the same time, the system diagnoses insight applicability across related tasks. For each task, an insight can be observed as helpful, harmful, or not retrieved at all. Insights that consistently perform poorly or are missing when needed are marked as candidates for refinement. On the right, the library refinement process is shown. Candidate insights are replayed through library retrieval and self-verification on source tasks. Based on positive, negative, or unretrieved outcomes, their applicability conditions are updated by keeping, removing, or adding conditions. The refined versions are validated, and the best-performing versions are selected to form the updated library. Overall, the figure shows a closed loop in which solving optimization tasks both uses the library and continuously improves it by refining when each insight should apply.}
  \label{fig:workflow}
\end{figure*}

\subsection{Library Learning
\label{sec:method:library_learning}} 

This stage is designed to generate reusable insights as structured 4-tuples (\textit{Taxonomy}, \textit{Condition}, \textit{Explanation}, \textit{Example}) and organize them in a hierarchical taxonomy dictionary for efficient retrieval while minimizing redundancy in the library. Figure~\ref{fig:workflow} (Library Learning Flow) clearly illustrates the overall workflow.

\paragraph{\textbf{Insight Extraction and Representation}}
For each arriving task, the system first constructs a mathematical formulation, then generates an executable solver program and invokes the solver. When the library is non-empty, both steps are guided by retrieved insights. If the generated program does not achieve the correct optimal value, the system performs solver-guided self-exploration: It iteratively proposes executable programs, reuses prior failures as context, and receives verification from the solver. Once a program achieves its correct objective, it is treated as a proxy for the gold standard in anchor insight extraction.

Each insight is represented as a structured 4-tuple: \textit{Taxonomy}, hierarchical labels for indexing and retrieval; \textit{Condition}, an explicit description of the applicable prerequisites and triggering signals in the problem; \textit{Explanation}, the underlying principle of applying this insight; and \textit{Example}, a concrete demonstration such as corresponding mathematical expressions or code snippet. 

\paragraph{\textbf{Insight Verification.}}
To ensure the retrievability and correctness of newly generated natural-language insights and to prevent the uncontrolled accumulation of idle or invalid entries in the library, each insight undergoes two stages of strict verification before being stored:
\emph{Retrieval verification}: temporarily incorporate the insight into a copy of the library, and check whether the source task can match it via LLM-driven retrieval;
\emph{Execution verification}: reapply the new insight together with other retrieved insights to its source task to verify that the new insight can resolve the original failure and lead to optimality.

At each stage, a limited number of rewrite attempts is allowed until the insight passes verification, during which the LLM performs targeted revisions of the taxonomy, condition or the whole insight to improve its efficacy.

\paragraph{\textbf{Library Storage.}}
Experience library is maintained as a dynamically updating taxonomy dictionary organized into three main tracks: \emph{Domain Modeling} (problem-specific structures and assumptions), \emph{General Formulation} (reusable mathematical patterns), and \emph{Code Implementation} (solver-specific coding practices). Within each track, the library organizes insights using a two-level labeling scheme, where Level-1 captures a broad category and Level-2 specializes it into a more specific subcategory. The taxonomy dictionary is initialized with few-shot labels and expands online: Each new insight is either mapped to an existing category or, if no suitable label exists, prompts the LLM to propose new Level-1 or Level-2 labels. Each label is also assigned a condition, written by the LLM, that specifies when the category should be retrieved. 

To prevent uncontrolled growth of the experience library, we incorporate an online merging mechanism throughout training to prevent degradation in retrieval and execution efficiency. For each newly generated insight, the system performs redundancy checking with existing insights, merging when appropriate, followed by correctness validation on all associated tasks.

\paragraph{\textbf{Library Retrieval.}}
During solution generation, retrieved insights from the \emph{Domain Modeling} and \emph{General Formulation} tracks guide the construction of the mathematical model, while insights from the \emph{Code Implementation} track guide solver-code generation. To align a target task with relevant insights, we employ a two-step LLM-driven retrieval procedure: Quick label matching, then full applicability check. The system first scans the taxonomy dictionary to identify labels that are potentially relevant to the context of the tasks. For example, a Level-2 label such as Fixed Charge (Big-M Linking) will probably be detected when the problem description specifies that service or flow is allowed only if a facility is opened. After candidate labels are identified, the system rigorously evaluates each associated insight by examining its condition, and only the most applicable insights are retained. By employing a progressive disclosure strategy that reveals only task-relevant information when it becomes applicable, this procedure mitigates context overflow while maintaining the effectiveness of retrieval.

\subsection{Library Evolution
\label{sec:method:library_evolution}} 

While Library Learning expands the repository of insights, Library Evolution aims to transform task-specific lessons into broadly applicable knowledge. Since each insight’s applicability is defined by a condition induced by a specific task, early conditions are often too narrow (failing to trigger on relevant tasks) or too broad (causing misretrieval). Left unchecked, these misalignments lead to widespread misguidance or missed opportunities. Library Evolution counters this with a diagnosis–refinement cycle: It detects misaligned insights, aggregates evidence across tasks, and refines conditions at the end of each iteration. The refinement is guided by an aggregate metric rather than ad hoc fixes. As illustrated in Figure~\ref{fig:lib-refine}, library refinement can be understood as adjusting each insight’s condition toward the correct retrieval boundary in the problem space. 

\paragraph{\textbf{Library Diagnosis.}}
 After an initial pass over the training tasks, the system continuously diagnoses task failures and traces their interactions with library insights through iterative cycles of insight retrieval and solution generation across tasks. By comparing the formulation and program generated under the guidance of retrieved insights with the ground truth to locate discrepancies, the diagnostic LLM agent partitions the relationship between each insight $i$ and its associated tasks into three disjoint categories:
$\Pi(i)=\{\text{Positive}:S_i^+, \text{Negative}:S_i^-, \text{Unretrieved}:S_i^u\}$
where $S_i^{+}$ contains tasks for which the insight was applicable and contributed to the correct formulation and program, $S_i^{-}$ contains tasks for which it was misleading and degraded performance, and $S_i^{u}$ contains tasks for which it was not retrieved but would have been beneficial. By maintaining these partitions across iterations, the system continuously builds a performance profile for each insight. If a failed task is subsequently solved after removing a misleading (negative) insight or by injecting a previously unretrieved one, the system attributes the failure to condition misalignment rather than lack of knowledge and thus avoids redundant insight generation. Figure~\ref{fig:workflow} (Library Diagnosis Flow) illustrates the workflow for this stage.

\begin{figure}
  \centering
  \includegraphics[width=0.7\linewidth]{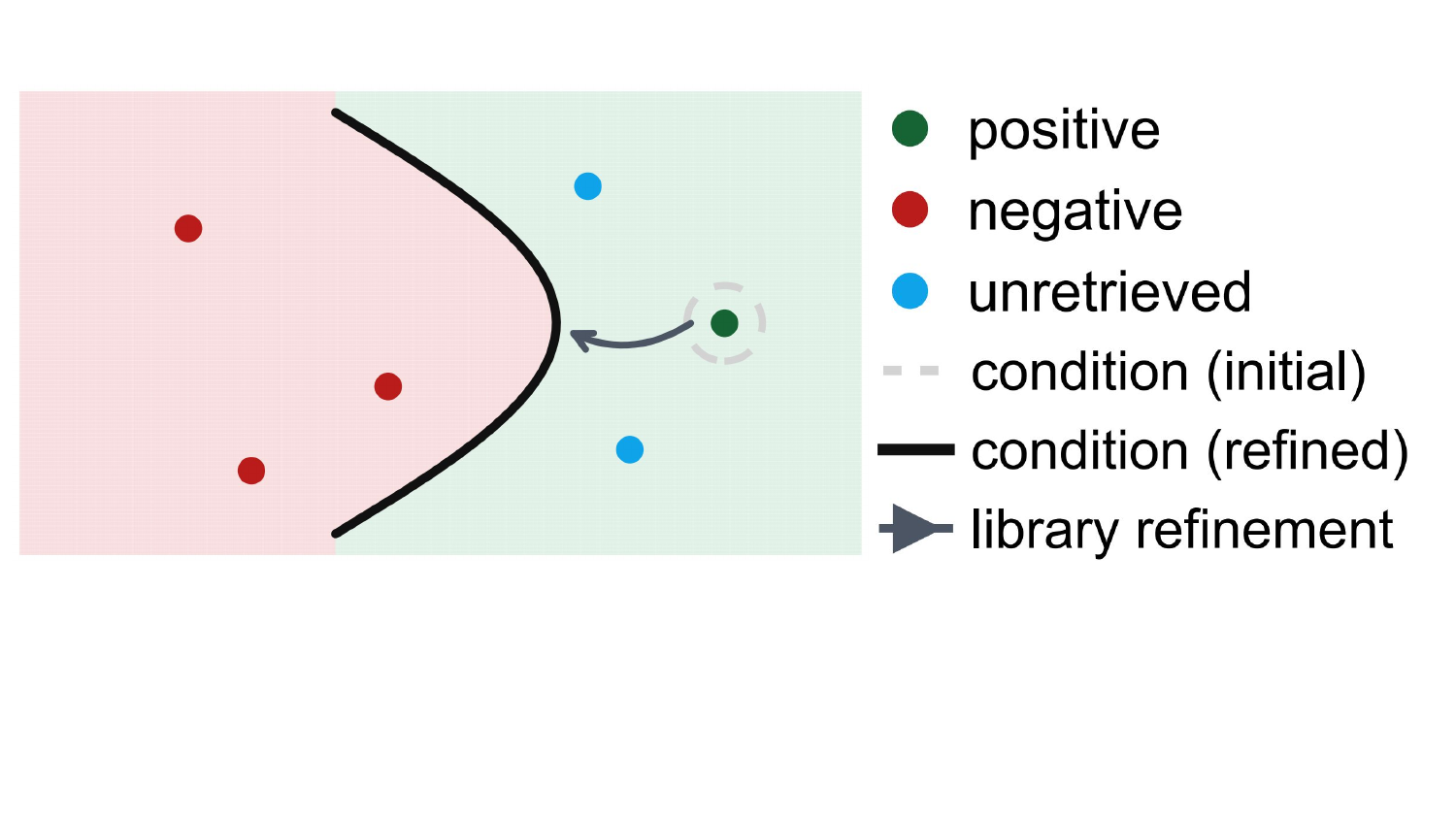}
  \caption{A locally verified \emph{initial} condition is refined into a broader \emph{applicability} boundary through LLM-driven refinement.}
  \Description{The figure provides an abstract illustration of how an insight’s applicability condition is refined over time. Each point represents a task outcome, where green points indicate tasks for which the insight is helpful, red points indicate tasks for which it is harmful, and blue points indicate tasks where the insight was not retrieved. The dashed curve shows the initial applicability condition, which is locally verified but narrow. Through library refinement, the condition boundary is adjusted, shown as the solid curve, to better separate positive cases from negative ones and to cover relevant unretrieved cases. The arrow indicates the refinement process, resulting in a broader and more accurate applicability boundary for the insight.}
  \label{fig:lib-refine}
\end{figure}

\paragraph{\textbf{Library Refinement.}}
Building on the diagnosed interaction of each insight, the LLM agent first strengthens or prunes its applicability condition. Negative tasks contribute explicit \emph{inapplicability clauses} (e.g., constraints or contexts that block use), while unretrieved tasks highlight missing applicability signals. Multiple refinement strategies are asked to be proposed (e.g., adding preconditions, introducing keyword anchors, merging overlapping triggers) and produce candidate conditions with the goal of preserving correct cases, eliminating mismatches, and recovering previously missed tasks. Then, each candidate condition replaces the original and is tested over the union $R_i = S_i^{+} \cup S_i^{-} \cup S_i^{u}$. A performance score
    {\small
    \[
    p_i = \frac{|\text{kept positives}| + |\text{corrected negatives}| + |\text{recovered unretrieved}|}{|R_i|}
    \]
    }
    quantifies improvement. Here, \emph{kept positives} are tasks that remain correctly retrieved after refinement; \emph{corrected negatives} are tasks that were misled by the insight before and are no longer retrieved; and \emph{recovered unretrieved} are tasks that become correctly retrieved after refinement. We accept refinements that increase $p_i$ and keep the one with the highest $p_i$. The workflow for this stage is illustrated in the right panel of Figure~\ref{fig:workflow}.

\begin{table*}[t]
\centering
\small
\setlength{\tabcolsep}{2.2pt}
\renewcommand{\arraystretch}{1.15}
\caption{Accuracy (\%) on \emph{Test Split} and \emph{Out-of-Distribution} datasets, along with micro- and macro-averaged accuracy across datasets. Best per column in \textbf{bold} and second best underlined (ablation variants excluded from highlighting).}
\label{tab:overall_results}
\begin{tabular}{c c c c c c c c c c c c}
\toprule
& & \multicolumn{6}{c}{\textbf{Test Split}} & \multicolumn{4}{c}{\textbf{Out-of-Distribution}} \\
\cmidrule(lr){3-8} \cmidrule(lr){9-12}

\multicolumn{2}{c}{\textbf{Method}}
& \makecell{NLP4LP\\(73)}
& \makecell{NL4OPT\\(64)}
& \makecell{IndustryOR\\(25)}
& \makecell{MAMO\\(ComplexLP)\\(34)}
& \makecell{Micro}
& \makecell{Macro}
& \makecell{LogiOR\\(92)}
& \makecell{OptiBench\\(403)}
& \makecell{Micro}
& \makecell{Macro} \\
\midrule

\multirow{3}{*}{\emph{Prompt-based}}
& Standard 
& 68.5 & 54.7 & 52.0 & 44.1 
& 57.7 & 54.8 
& 46.7 & 72.7 
& 67.9 & 59.7 \\
& OptiMus 
& 71.2 & 73.4 & 36.0 & 29.4 
& 60.2 & 52.5 
& 17.4 & 74.7 
& 64.1 & 46.0 \\
& ORThought 
& 69.9 & 75.0 & \textbf{60.0} & 41.2 
& 65.3 & 61.5 
& 44.6 & \underline{84.4} 
& \underline{77.0} & \underline{64.5} \\
\midrule

\multirow{2}{*}{\emph{Fine-tuning}}
& ORLM
& \underline{86.3} & \textbf{87.5} & 36.0 & 55.9
& \underline{75.0} & \underline{66.4}
& 19.6 & 78.2
& 67.3 & 48.9 \\
& LLMOPT
& 83.6 & 79.7 & 28.0 & 41.2
& 67.9 & 58.1
& 27.2 & 79.7
& 69.9 & 53.5 \\
\midrule

\multirow{3}{*}{\emph{Experience-learning}}
& Reflexion 
& 76.7 & 64.1 & \underline{56.0} & 47.1 
& 64.8 & 61.0 
& 43.5 & 76.9 
& 70.7 & 60.2 \\
& Expel
& 79.5 & 67.2 & 48.0 & \underline{70.4}
& 69.9 & 66.3
& \underline{47.8} & 80.4
& 74.3 & 64.1 \\
\rowcolor{lightgray}
& \textbf{AlphaOPT}
& \textbf{87.7} & \underline{81.2} & \textbf{60.0} & \textbf{76.5}
& \textbf{80.1} & \textbf{76.4}
& \textbf{56.0} & \textbf{93.5}
& \textbf{86.5} & \textbf{74.8} \\
\midrule

\multirow{4}{*}{\emph{Ablation (AlphaOPT)}}
& w/o Taxonomy 
& \makecell{84.9 \\ {\footnotesize($\downarrow 2.8$)}}
& \makecell{79.9 \\ {\footnotesize($\downarrow 1.3$)}}
& \makecell{60.0 \\ {\footnotesize($-$)}}
& \makecell{79.4 \\ {\footnotesize($\uparrow 2.9$)}}
& \makecell{79.1 \\ {\footnotesize($\downarrow 1.0$)}}
& \makecell{76.1 \\ {\footnotesize($\downarrow 0.3$)}}
& \makecell{48.9 \\ {\footnotesize($\downarrow 7.1$)}}
& \makecell{92.5 \\ {\footnotesize($\downarrow 1.0$)}}
& \makecell{84.4 \\ {\footnotesize($\downarrow 2.1$)}}
& \makecell{70.7 \\ {\footnotesize($\downarrow 4.1$)}} \\
& w/o Applicability Condition 
& \makecell{83.6 \\ {\footnotesize($\downarrow 4.1$)}}
& \makecell{81.2 \\ {\footnotesize($-$)}}
& \makecell{56.0 \\ {\footnotesize($\downarrow 4.0$)}}
& \makecell{67.6 \\ {\footnotesize($\downarrow 8.9$)}}
& \makecell{76.5 \\ {\footnotesize($\downarrow 3.6$)}}
& \makecell{72.1 \\ {\footnotesize($\downarrow 4.3$)}}
& \makecell{47.8 \\ {\footnotesize($\downarrow 8.2$)}}
& \makecell{91.8 \\ {\footnotesize($\downarrow 1.7$)}}
& \makecell{83.6 \\ {\footnotesize($\downarrow 2.9$)}}
& \makecell{69.8 \\ {\footnotesize($\downarrow 5.0$)}} \\
& w/o Refinement 
& \makecell{82.2 \\ {\footnotesize($\downarrow 4.1$)}}
& \makecell{80.7 \\ {\footnotesize($\downarrow 0.5$)}}
& \makecell{44.0 \\ {\footnotesize($\downarrow 16.0$)}}
& \makecell{61.8 \\ {\footnotesize($\downarrow 14.7$)}}
& \makecell{73.3 \\ {\footnotesize($\downarrow 6.8$)}}
& \makecell{67.2 \\ {\footnotesize($\downarrow 9.2$)}}
& \makecell{48.9 \\ {\footnotesize($\downarrow 7.1$)}}
& \makecell{92.1 \\ {\footnotesize($\downarrow 1.4$)}}
& \makecell{84.1 \\ {\footnotesize($\downarrow 2.4$)}}
& \makecell{70.5 \\ {\footnotesize($\downarrow 4.3$)}} \\

& w/o Insight Example
& \makecell{86.3 \\ {\footnotesize($\downarrow 1.4$)}}
& \makecell{78.1 \\ {\footnotesize($\downarrow 3.1$)}}
& \makecell{56.0 \\ {\footnotesize($\downarrow 4.0$)}}
& \makecell{58.8 \\ {\footnotesize($\downarrow 17.7$)}}
& \makecell{76.0 \\ {\footnotesize($\downarrow 4.1$)}}
& \makecell{69.8 \\ {\footnotesize($\downarrow 6.6$)}}
& \makecell{45.7 \\ {\footnotesize($\downarrow 10.3$)}}
& \makecell{91.0 \\ {\footnotesize($\downarrow 2.5$)}}
& \makecell{82.9 \\ {\footnotesize($\downarrow 3.6$)}}
& \makecell{68.4 \\ {\footnotesize($\downarrow 6.4$)}} \\

\bottomrule
\end{tabular}
\end{table*}

\subsection{Optimization Perspective\label{sec:method:theory}}
The framework can be viewed as an iterative solution to the optimization problem in the library space. Let $\mathcal{L}$ denote a candidate library and $\mathcal{T}$ the distribution of the optimization problems we want to solve. The objective is to maximize task success while penalizing library complexity to mitigate retrieval inefficiency and long-context degradation in LLM inference:

\[
\max_{\ell \in \mathcal{L}} \;\; \mathbb{E}_{t \sim \mathcal{T}}\!\left[ \mathrm{Success}(t \mid \ell) \right] \;-\; \lambda \,\Omega(\ell),
\]
where $\mathrm{Success}(t \mid \ell)$ indicates whether $\ell$ enables the system to produce a program that achieves the correct optimal objective for task $t$, and $\Omega(\ell)$ quantifies library complexity (e.g., number of insights or redundancy-adjusted size). Under our problem design---bounded and continuous property of $\text{Success}(\cdot)$ and $\Omega (\cdot)$, sufficient exploration under solver verification, and bounded merging---the refinement dynamics converge to a locally optimal library. In Appendix~\ref{sec:appendix:proof}, we provide a conceptual sketch that shows that convergence holds: Since refinement in the second phase strictly improves the first term, while verified merging in the first phase reduces the second term without diminishing the first, sufficient exploration combined with iterative cycles of library learning and evolution ensures convergence to a local optimum. Given the inherent ambiguity of natural language and stochasticity in LLM outputs, we present this perspective not as a strict theorem but as a principled justification for the acquisition–refinement design and redundancy-reduction operations.

\section{Experiments}
\label{headings}

Our experiments are designed to reflect the requirements that arise in real-world optimization and operations research applications. In these settings, methods are expected not only to perform well on standard benchmarks but also to transfer across domains, remain effective when limited supervision is available, improve steadily as more data become available, and offer results that can be inspected and audited. We organize our empirical evaluation around four questions: (1) How well does the method generalize across domains? (2) How effective are the insight content structure and library refinement mechanisms? (3) Does performance improve consistently as more training data become available? (4) Does the library size converge as training progresses?

\subsection{Experimental Setup}
Our experiments are conducted on a dataset of 452 problem instances, aggregated from four real-world optimization task datasets---the NLP4LP \citep{ahmaditeshnizi2024optimus}, NL4OPT \citep{NL4OPT}, IndustryOR \citep{huang2025orlm}, MAMO (ComplexLP) \citep{huang2024mamo}.  These collections span various formulation types and originate from diverse sources, which include academic papers, textbooks, and real‐world industry scenarios, and each instance pairs a natural-language problem description with its optimal solution. Detailed dataset statistics are provided in Appendix~\ref{appendix:datasets}.

We perform stratified sampling within each dataset, randomly partitioning $70\%$ for training and $30\%$ for testing. The experience library is constructed only from training tasks, which ensures that training-derived insight examples do not leak into the test set. To assess out-of-distribution generalization, we also evaluate on LogiOR \citep{yang2025orthought} and OptiBench \citep{yang2024optibench}. Because our framework derives feedback from correct solutions, it is relatively sensitive to data noise; we therefore train and evaluate on clean splits that exclude instances labeled as erroneous. Specifically, for NLP4LP, IndustryOR, and LogiOR we use the cleaned versions provided by \citet{yang2025orthought}, and for NL4OPT, MAMO (ComplexLP), and OptiBench we use the cleaned releases from \citet{astorga2025autoformulation}.

Unless otherwise specified, GPT-4o (OpenAI, 2024) with temperature 0 is used as the backbone. We use the success rate as the primary evaluation metric, following the evaluation protocol of \citet{yang2025orthought}, whereby a task is considered successful if the LLM-generated optimal value closely aligns with the provided ground-truth solution. We also report micro- and macro-averaged accuracy for both test-split and out-of-distribution datasets.\footnote{Given evaluation datasets $\mathcal{D}=\{D_1,\dots,D_K\}$ with $|D_k|$ instances and per-dataset accuracy $\mathrm{Acc}(D_k)$, the micro-averaged accuracy $\mathrm{Acc}_{\text{micro}}=\sum_{k} |D_k|\,\mathrm{Acc}(D_k) \big/ \sum_{k} |D_k|$ weights datasets by size, whereas the macro-averaged accuracy $\mathrm{Acc}_{\text{macro}}=\frac{1}{K}\sum_{k} \mathrm{Acc}(D_k)$ weights all datasets equally.}

\textbf{Baselines.} We evaluate against three families of baselines. (i) \textbf{Prompt Designing}: directly generate the mathematical model from manually crafted prompts, which include OptiMUS \citep{ahmaditeshnizi2024optimus}, and ORThought \citep{yang2025orthought}. (ii) \textbf{Fine-tuning}: adapt model parameters through fine-tuning, including ORLM \citep{huang2025orlm}, built on LLaMa3-8B, and LLMOPT \citep{JiangShu2025llmopt}, built on Qwen2.5-14B. To remove train--test overlap concerns, we additionally retrain LLMOPT on \textbf{AlphaOPT}'s training split and report its retrained results in our comparison. (iii) \textbf{Experience Learning}: Accumulate reusable knowledge from past task-solving attempts to improve decision quality across tasks, including two representative works Reflexion \citep{shinn2023reflexion} and ExpeL \citep{zhao2023expel}. For ExpeL, we adapt the original framework to operations research tasks by making minor changes to task-related prompts and evaluation criteria.

\subsection{Overall Performance}

Table~\ref{tab:overall_results} summarizes the overall performance across test and out-of-distribution benchmarks. \textbf{AlphaOPT} consistently outperforms most of the baselines on the test splits, with only ORLM remaining higher on the individual NL4OPT, and it also achieves the best performance on the most challenging datasets IndustryOR and LogiOR, exceeding the second-best method by an average of 6\%.

While AlphaOPT adopts a stronger backbone than the fine-tuned baselines, these gains do not stem from it alone. Fine-tuning and AlphaOPT represent complementary mechanisms rather than the same capability at different scales: fine-tuning encodes domain patterns into the model parameters, whereas AlphaOPT stores solver-verified knowledge in a reusable external library applied at inference time. We transfer AlphaOPT's learned library to LLMOPT (a fine-tuned Qwen2.5-14B) at inference time, which still yields a 6\% average gain, confirming that the external library, not the backbone, drives the improvements.
Also, fine-tuning's apparent advantages are less conclusive. Many existing benchmarks are derived from a small set of seed problems~\citep{NL4OPT,huang2024mamo}, which favors fine-tuning approaches that excel at pattern memorization and inflates their in-distribution scores. Moreover, fine-tuning methods require detailed annotations of mathematical formulations and code to reach strong performance, whereas AlphaOPT attains competitive results using only answer labels.\footnote{Since AlphaOPT relies on self-exploration rather than gold programs, an incorrect program could in principle pass by reaching the correct optimal value. Auditing all self-explored tasks against reference solutions, we found such spurious passes negligible (3 cases, 0.8\%, with only minor, numerically inconsequential variable-type imprecision); stronger safeguards (e.g., multi-instance verification, LLM-based consistency checking) are left to future work.} This advantage is particularly significant in OR practice, where high-quality, expert-annotated datasets are generally scarce.
Additionally, compared with baseline experience-learning methods, our approach achieves superior performance as methods such as ExpeL tend to learn generic insights that lack the detailed modeling and coding guidance needed to prevent common errors. The full list of ExpeL-learned insights is provided in Appendix~\ref{appendix:expel_insights}.

Robustness to distribution shift is where our framework's advantage is most pronounced: on LogiOR~\citep{yang2025orthought} and OptiBench~\citep{yang2024optibench}, two benchmarks not seen during training, fine-tuned models degrade sharply---ORLM and the retrained LLMOPT reach only 19.6\% and 27.2\% on LogiOR, respectively. In contrast, AlphaOPT attains 56.0\% on LogiOR and 93.5\% on OptiBench, indicating that its experience library captures transferable modeling principles rather than dataset-specific solution patterns, leading to stronger robustness under distribution shift. Beyond optimization formulation, AlphaOPT further generalizes to OR algorithm design and general-purpose code generation, indicating that it is a broadly effective framework for knowledge-intensive tasks, unlike OR-specialized fine-tuned models. See Appendix~\ref{appendix:cross_domain} for experimental details.

\subsection{Ablation Study}
We ablate the core components of \textbf{AlphaOPT} to isolate their individual contributions. \textbf{w/o Taxonomy} removes the hierarchical taxonomy, so insights are retrieved without label-based organization; \textbf{w/o Applicability Condition} drops the explicit applicability conditions, matching insights by taxonomy label alone.
We also evaluate \textbf{w/o Refinement}, in which insights are stored after extraction without the Library Evolution stage that adjusts their applicability conditions, and \textbf{w/o Insight Example}, in which insights omit their concrete mathematical or code-level demonstrations.

The results show that both retrieval components matter: dropping the applicability condition costs 3.6 and 4.3 points on the micro- and macro-averaged accuracy of the test split, and dropping the taxonomy lowers the out-of-distribution macro average by 4.1 points. Most tellingly, replacing the two-step retrieval with embedding-based similarity causes the largest drop, reducing the test-split micro average by 4.9 points and the out-of-distribution macro average by 5.4 points, as similarity matches surface text rather than logical applicability, misretrieving insights for structurally different problems. This confirms that AlphaOPT's gains stem from its structured retrieval, not the insight content alone.

Removing refinement leads to consistent performance drops across both test-split and out-of-distribution datasets. On the test split, micro- and macro-averaged accuracy decrease by 6.8 and 9.2 points respectively, with the largest degradation on structurally complex datasets such as IndustryOR and MAMO (ComplexLP). Out-of-distribution performance also declines, indicating that unrefined applicability conditions are insufficient for reliable cross-task reuse. Removing insight examples yields a smaller but systematic drop of 4.1 and 6.6 points in test-split micro and macro accuracy, and 17.7 on the formulation-heavy MAMO (ComplexLP), confirming that worked demonstrations help translate principles into executable formulations.

\subsection{Continual Growth and Library Convergence}
We test whether \textbf{AlphaOPT} improves as more data become available by training on incremental subsets of 100, 200, and 300 items. As shown in Table~\ref{tab:continual_learning_results}, out-of-distribution accuracy (LogiOR, OptiBench) steadily improves with data size, without any model-parameter updates.

This also empirically validates the analysis in Section~\ref{sec:method:theory}, under which AlphaOPT converges to a locally optimal library. As shown in Figure \ref{fig:lib_converge}, cumulative training accuracy rises rapidly early and then saturates, aside from a minor fluctuation during the mid-training stage attributable to the stochasticity of LLM generation, while the library size slows and stabilizes. This stems from online insight merging, solver-based verification, and refinement, which progressively saturate the number of distinct, verifiable insights that can be added.

Together, continual improvement alongside a converging library shows that AlphaOPT scales without unbounded accumulation of redundant insights. This convergence also bounds cost at scale: once the library stabilizes, additional tasks are largely absorbed by existing insights rather than triggering new learning, so token cost grows sub-linearly with dataset size; Appendix~\ref{appendix:cost} details the cost profile and two readily deployable cost-reduction strategies for large-scale industrial deployment.

\begin{table}[t]
\centering
\caption{AlphaOPT steadily improves in both Micro and Macro averages with increasing training size.}
\label{tab:continual_learning_results}

\setlength{\tabcolsep}{6pt}
\renewcommand{\arraystretch}{1.15}

\begin{tabular}{c c c c}
\toprule
Training Size & MicroAvg (\%) & MacroAvg (\%) \\
\midrule
100 & 83.24 & 65.80  \\
200 & 85.09 & 69.22  \\
300 & 85.21 & 72.12  \\
\bottomrule
\end{tabular}
\end{table}

\begin{figure}[th!]
  \centering
  \includegraphics[width=1\linewidth]{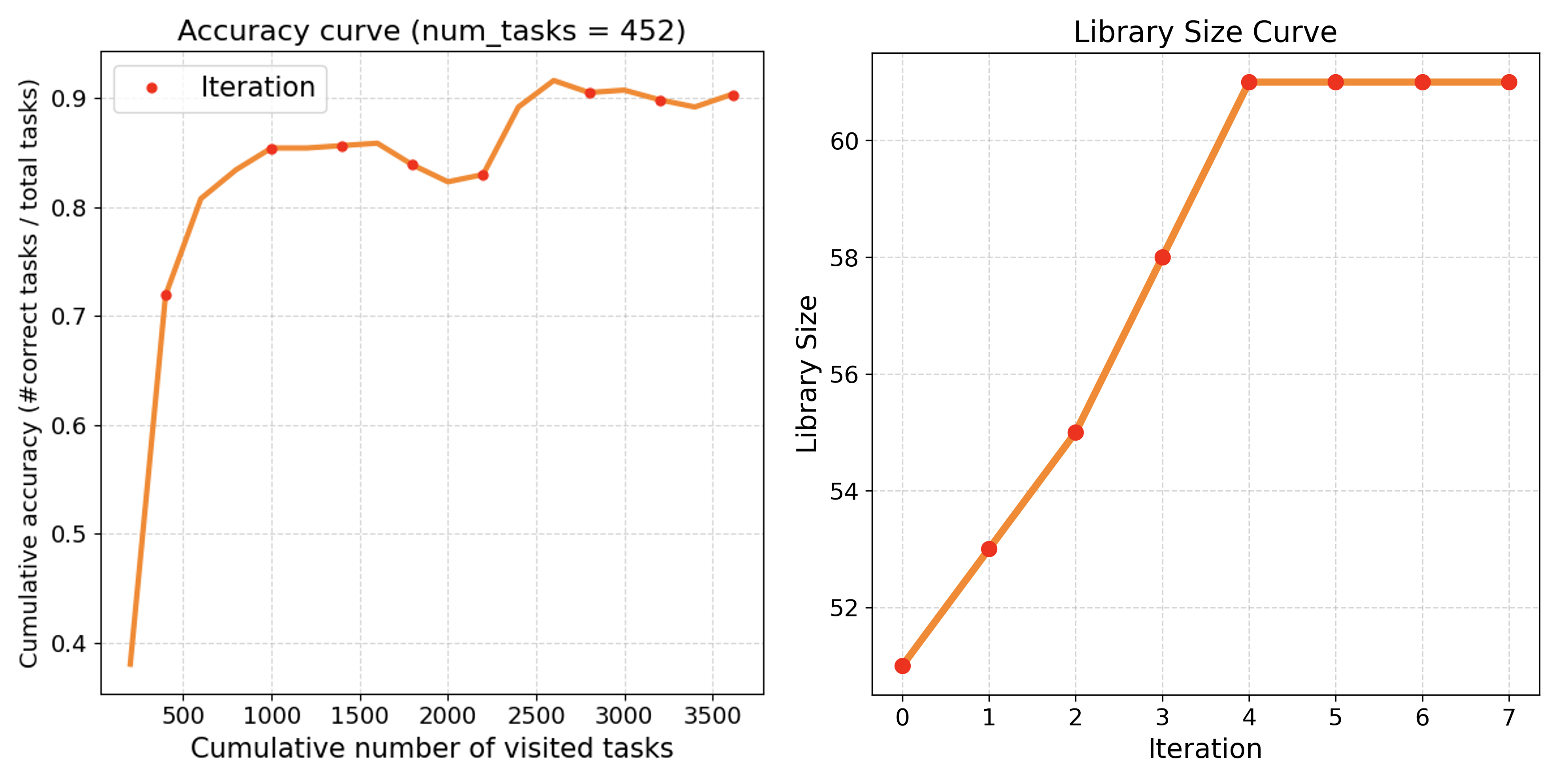}
  \caption{Training dynamics of the experience library. As training progresses, accuracy converges while library growth stabilizes.}
  \Description{The figure shows the training dynamics of the experience library across iterations. The left plot reports cumulative accuracy as more tasks are visited. Accuracy increases rapidly at the beginning, then improves more gradually, and eventually stabilizes as additional tasks are processed. The right plot shows the size of the experience library over training iterations. The library grows steadily in the early iterations and then remains mostly the same after four iterations. Together, the two plots illustrate that performance converges while the library size stabilizes.}
  \label{fig:lib_converge}
\end{figure}

\section{Library Analysis}

\subsection{Library Taxonomy and Knowledge}

\begin{figure}[t]
  \centering
\includegraphics[width=1\linewidth]{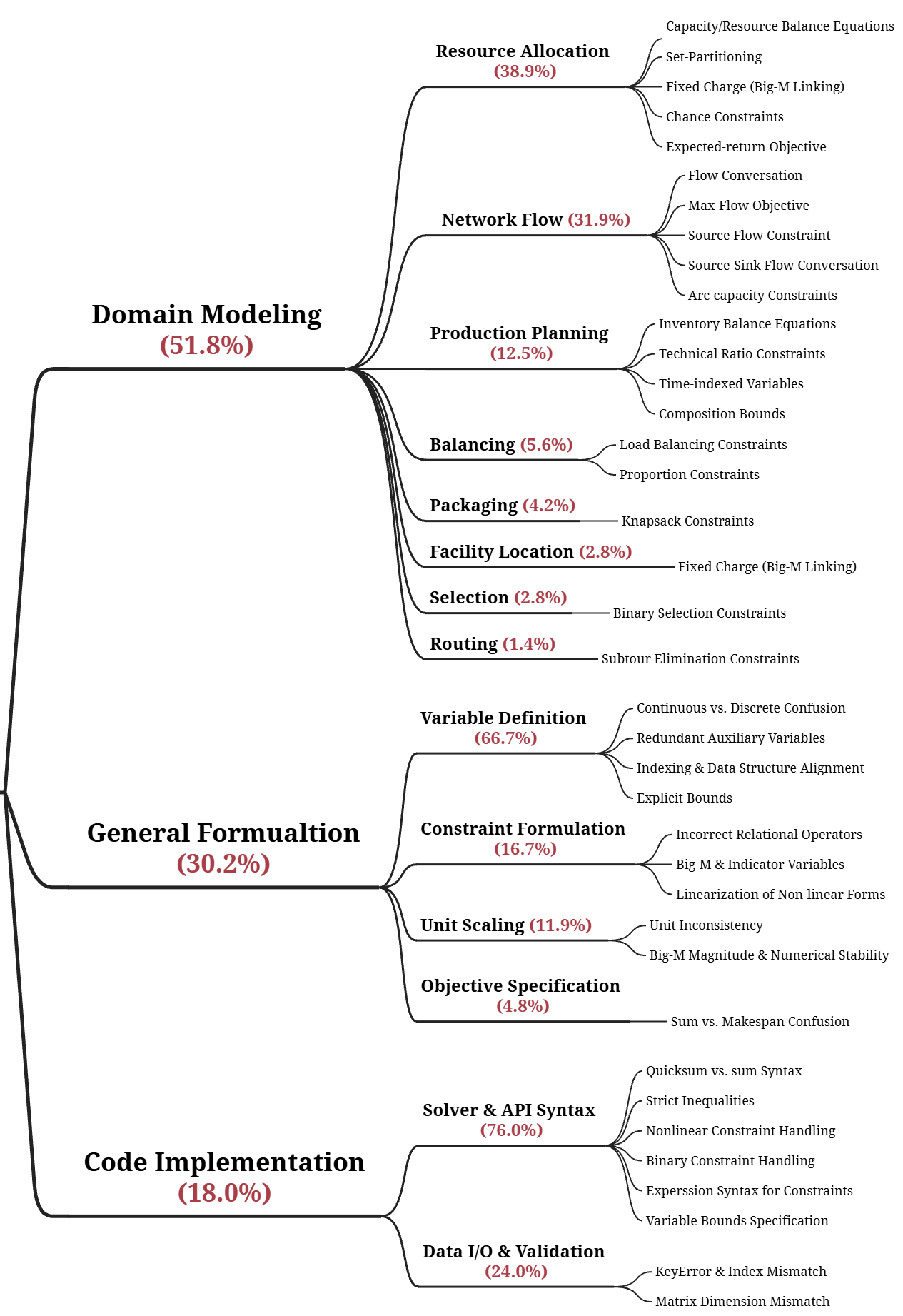}
  \caption{The experience library taxonomy consists of three main tracks, 14 level-1 labels, and 38 level-2 labels. Percentages shown in red indicate their proportions.} 
  \Description{The figure shows the structure of the learned experience library taxonomy. The taxonomy is organized into three main tracks: Domain Modeling, General Formulation, and Code Implementation. Each track is further divided into level-1 categories, which are then broken down into more specific level-2 concepts. Domain Modeling covers common problem structures such as resource allocation, network flow, production planning, balancing, packaging, facility location, selection, and routing. For each category, representative modeling elements are listed, such as capacity and flow balance equations, fixed-charge modeling, inventory balance, and subtour elimination. General Formulation focuses on how optimization models are expressed, including variable definition, constraint formulation, unit scaling, and objective specification, with examples of typical pitfalls such as continuous versus discrete confusion, Big-M misuse, and incorrect objective definitions. Code Implementation captures issues that arise when translating formulations into solver code, including solver and API syntax, constraint expression syntax, and data input, output, and validation errors. Percentages shown in red indicate the proportion of experiences assigned to each track and to each level-1 category, reflecting how frequently different types of insights appear in the learned library.}
  \label{fig:library_analysis}
\end{figure}

To fully interpret the experience library, we visualize its hierarchical taxonomy and the distribution of insights (as shown in Figure~\ref{fig:library_analysis}). Detailed level-2 label distributions and the full taxonomy specification are provided in Appendix~\ref{appendix:lvl2_insights} and Appendix~\ref{appendix:taxonomy}.

Overall, LLMs tend to make mistakes more frequently on mathematical modeling than in code implementation, especially when problem structures are complex or modeling conventions are implicit. We summarize the challenges as threefold:

\paragraph{\textbf{Structural Coupling and Balance.}}
In the Domain Modeling track, LLMs exhibit two primary difficulties: (i) capturing structural coupling across variables or stages, and (ii) maintaining global balance constraints over system resources or quantities. These challenges are particularly pronounced in problem classes such as \emph{Resource Allocation} (38.9\%), \emph{Network Flow} (31.9\%), and \emph{Production Planning} (12.5\%). For example, in production planning they often fail to couple machine-level decisions with inter-temporal inventory balance, instead constraining total processing time via aggregated variables and violating structural dependencies.

\paragraph{\textbf{From Intuition to Rigorous Formulation.}}
In the General Formulation track, LLMs are prone to pitfalls when translating intuitive or contextual descriptions into mathematically rigorous and solver-consistent formulations, with mistakes concentrated in \emph{Variable Definition} (66.7\%), \emph{Constraint Formulation} (16.7\%), and \emph{Unit Scaling} (11.9\%). These errors include unclear variable domains, confusion between continuous and discrete variables, misuse of relational operators, improper handling of Big-M or indicator variables, and unit inconsistencies. For example, \emph{at least}/\emph{at most} constraints (e.g., minimum production or budget limits) are often modeled as equalities or strict inequalities, yielding overly restrictive formulations.

\paragraph{\textbf{Bridging Models and Executable Code.}}
In the Code Implementation track, although such errors are less frequent, LLMs still exhibit gaps when translating symbolic mathematical models into executable solver code. Among these errors, 76\% stem from \emph{Solver \& API Syntax} (e.g., improper handling of nonlinear constraints, confusion between \texttt{quicksum} and \texttt{sum}), while the remaining 24\% arise from \emph{Data I/O \& Validation} issues (e.g., \texttt{KeyError}, index mismatches, and matrix dimension errors).

\subsection{Examples of Library Refinement\label{refined_examples}}

Representative refined insights are selected from the library to illustrate how their applicability conditions are adjusted based on the associated tasks.
\begin{figure}[th!]
  \centering
  \includegraphics[width=1\linewidth]{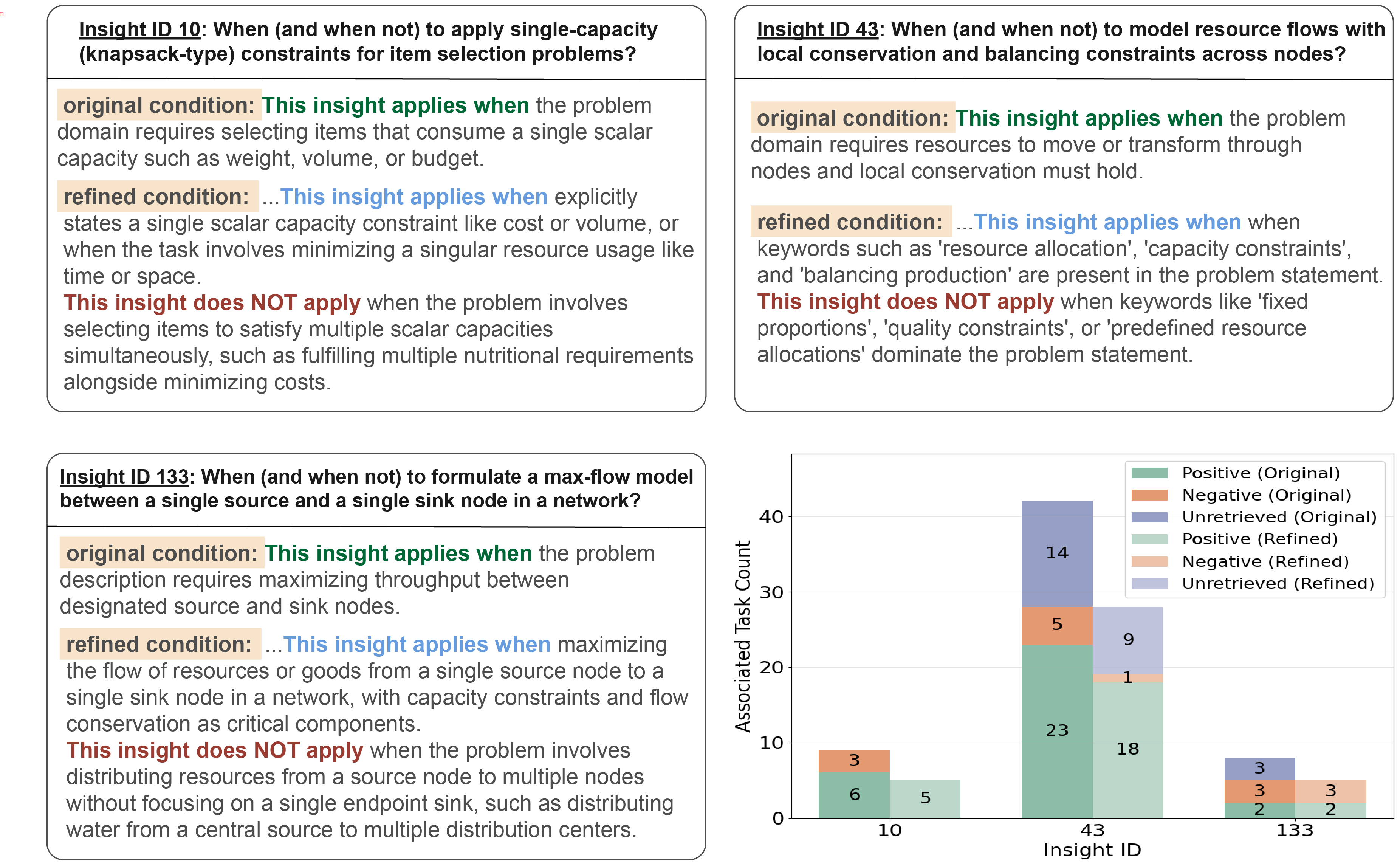}
  \caption{Examples of insight applicability conditions before and after refinement, along with the corresponding changes in task associations.}
  \Description{The figure shows concrete examples of insight applicability conditions before and after refinement, together with how task associations change as a result. The figure presents three representative insights, each identified by an ID, along with their original applicability conditions and the refined conditions learned through library refinement. The examples cover different modeling patterns, including single-capacity item selection, resource flow modeling with conservation constraints, and max-flow formulations between a source and a sink. For each insight, the refined condition is more specific about the problem features that must be present and explicitly states when the insight should not be applied. At the bottom right, the figure shows a bar chart summarizing how tasks are associated with each insight before and after refinement. Task outcomes are grouped into positive, negative, and unretrieved cases for both the original and refined conditions. After refinement, positive associations increase while negative and unretrieved associations decrease, indicating that the refined conditions better match the tasks where each insight is appropriate.}
  \label{fig:refined_examples}
\end{figure}

As shown in the Figure~\ref{fig:refined_examples}, \emph{Insight ID 10} targets single–scalar-capacity selection; the refinement broadens coverage by adding equivalent formulations such as minimizing a single resource (time/space), while explicitly excluding multi-capacity settings. \emph{Insight ID 43}  captures flow/conservation and production balancing across nodes; refinement adds lexical anchors (e.g., capacity constraints, balancing production) and excludes statements dominated by fixed proportions, quality constraints, or predefined allocations, and gating suppresses spurious retrievals whereby the structure is not true flow/conservation. \emph{Insight ID 133} focuses on max-flow between a single source and a single sink; refinement tightens the structural requirement and rules out source-to-many distribution tasks. 

Across the cases, four LLM-driven refinement strategies recur: (i) generalization via equivalent phrasings; (ii) lexical anchoring with positive keywords; (iii) explicit exclusions to reduce misalignment; and (iv) structural qualifiers to prevent misuse. As shown in Figure~\ref{fig:refined_examples}, these semantic refinements reduce both negative and unretrieved cases while preserving most positive cases.

\section{Conclusion and Discussion}
\label{conclusion}

This paper addresses the limitations of previous methods by presenting \textbf{AlphaOPT}, a novel self-improving library learning framework for formulating optimization programs. Its core innovations lie in organizing experience as a self-updating hierarchical taxonomy of structured insights, generating and updating these insights under solver verification, and continually refining their applicability conditions from cross-task evidence. Learning from answer labels alone, it achieves much stronger out-of-distribution generalization than fine-tuning–based methods while providing interpretable, auditable knowledge that exposes LLMs’ characteristic failure patterns across domain modeling, formulation, and solver syntax. Finally, the framework generalizes beyond optimization formulation to broader domains such as OR algorithm design and general-purpose code generation, suggesting its promise as a general paradigm for knowledge-intensive reasoning.

Looking ahead, we highlight three promising directions. First, reasoning-oriented test-time scaling is especially promising for OR, whose results are inherently verifiable. Second, scaling to large-scale, real-world industrial problems beyond the toy examples that dominate current benchmarks, and moving beyond correctness toward more efficient algorithms, are both crucial for practical deployment, and our self-improving library offers a promising path toward them. Third, as LLM-based agentic coding frameworks become an increasingly effective tool for complex reasoning and software-engineering tasks, the condition-aware insights learned here could be elevated into higher-level, composable agent skills that abstract away from surface narratives and transfer at the workflow level.

\begin{acks}
This research is supported by the National Research Foundation (NRF), Prime Minister's Office, Singapore, under its Campus for Research Excellence and Technological Enterprise (CREATE) programme. The Mens, Manus, and Machina (M3S) is an interdisciplinary research group (IRG) of the Singapore-MIT Alliance for Research and Technology (SMART) centre.
This research is also supported by the National Research Foundation, Singapore under its AI Singapore AI Research Fundamental Research Collaborative (US-NSF Researcher Call) (AISG Award No: AISG3-RP-2025-036-USNSF).
\end{acks}

\newpage
\FloatBarrier
\bibliographystyle{ACM-Reference-Format}
\bibliography{reference}

\appendix

\FloatBarrier   

\section{Proof of the Library Convergence\label{sec:appendix:proof}}
Recall the optimization problem in the library training phase
\[
F(\ell) \;=\; \mathbb{E}_{t \sim \mathcal{T}_{\mathrm{train}}}\!\big[\, r(t \mid \ell)\,\big] \;-\; \lambda \, \Omega(\ell),
\]
where \(r(t,\ell)\) is a bounded reward function that implements the role of the original \(\mathrm{Success}(t\mid\ell)\) (i.e., it measures the matching quality between optimization problem \(t\) and library \(\ell\)), and \(\Omega(\ell)\) is a bounded complexity penalty.  

According to the problem setting, the iterative refinement algorithm satisfies:
\begin{enumerate}
    \item Monotone update: At iteration $k$, from $\ell_k$, the algorithm considers a set of admissible refinements $R(\ell_k) \subseteq \mathcal{L}$. Each accepted iteration consists of one of two types of operations:
    \begin{itemize}
      \item \emph{Merge step:} decreases $\Omega(\ell)$ while leaving $r(t,\ell)$ non-decrease for all relevant tasks;
      \item \emph{Exploration step:} improves $r(t,\ell)$ for some tasks without increasing $\Omega(\ell)$.
    \end{itemize}
    Therefore, every accepted refinement strictly increases $F(\ell)$; otherwise, the algorithm keeps $\ell_{k+1} = \ell_k$.
    \item Sufficient exploration: Any improving neighbor $\tilde{\ell} \in R(\ell_k)$ (i.e.\ one with a strictly larger objective) will eventually be discovered and executed. Empirically, this is achieved through iterative prompt optimization with LLMs.
    \item Boundedness: $r(t,\ell)$ and $\Omega(\ell)$ are bounded; hence $F(\ell)$ is bounded above and below. 
\end{enumerate}
The following theorem establishes that, under the assumption that the training and testing distributions are identical, the refinement procedure yields libraries that are locally optimal for the testing objective.
\begin{theorem}
Assume $\mathcal{T}_{\mathrm{train}} = \mathcal{T}_{\mathrm{test}}$. 
If the library space $\mathcal{L}$ is finite, the algorithm terminates in finitely many steps at a library $\ell^*$ that is a local maximizer for the testing objective. 
Moreover, the algorithm cannot terminate at a saddle point.
\label{thm:finite}
\end{theorem}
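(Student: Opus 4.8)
The plan is to argue termination via a strict monotone ascent on a bounded objective over a finite search space, then upgrade ``stationary point of the algorithm'' to ``local maximizer of $F$'', and finally rule out saddle points by exploiting the specific structure of the accepted operations. First I would invoke the three stated properties: by the monotone-update property, the sequence $F(\ell_0) < F(\ell_1) < \cdots$ is strictly increasing whenever the algorithm moves; since $\mathcal{L}$ is finite, no value of $F$ can repeat, so only finitely many strict ascents are possible, and the algorithm must halt at some $\ell^*$ after finitely many steps. (Boundedness of $F$ is not even strictly needed once $\mathcal{L}$ is finite, but I would mention it for the infinite-but-bounded remark hinted at in the surrounding text.)

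Next I would show $\ell^*$ is a local maximizer for the \emph{testing} objective. Here the assumption $\mathcal{T}_{\mathrm{train}} = \mathcal{T}_{\mathrm{test}}$ is doing the work: the objective $F$ the algorithm ascends on is literally the test objective, so it suffices to show $\ell^*$ is locally optimal for $F$. At termination, the sufficient-exploration property guarantees that every improving neighbor $\tilde\ell \in R(\ell^*)$ would have been discovered and executed; since the algorithm did not move, $R(\ell^*)$ contains no $\tilde\ell$ with $F(\tilde\ell) > F(\ell^*)$. Taking the admissible refinement set $R(\cdot)$ to define the neighborhood structure on $\mathcal{L}$, this says precisely that $\ell^*$ is a local maximizer with respect to that neighborhood. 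I would state explicitly that ``local'' here means ``within one admissible refinement step,'' which is the only notion of locality available in a discrete library space.

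Finally, for the no-saddle-point claim, I would argue that a saddle point would be a stationary configuration from which some admissible perturbation is non-decreasing in $F$ yet the point is not a local max --- but the two accepted operation types are each \emph{strictly} improving by construction (a merge strictly decreases $\Omega$ hence strictly increases $F$ when $r$ is held non-decreasing; an exploration step strictly increases the reward term). So any admissible neighbor $\tilde\ell$ either strictly increases $F$, in which case sufficient exploration forces the algorithm to take it (contradicting termination at $\ell^*$), or does not correspond to an accepted operation at all. Hence at $\ell^*$ there is no flat-but-escapable direction: every admissible move is either strictly uphill (excluded by termination) or not admissible, leaving $\ell^*$ a genuine local maximizer rather than a saddle.

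The main obstacle I anticipate is making the notions of ``neighborhood,'' ``local maximizer,'' and ``saddle point'' precise in a discrete, non-differentiable setting where $F$ is not a smooth function on a manifold --- the paper borrows continuous-optimization vocabulary, so the proof needs to fix a graph structure on $\mathcal{L}$ induced by $R(\cdot)$ and define local optimality and saddle points combinatorially (e.g., a saddle as a stationary point admitting a non-strictly-improving admissible move that is not itself a local max). Once that dictionary is pinned down, the three properties close the argument almost mechanically; the delicacy is entirely in the definitions and in honestly flagging (as the authors already do) that the ``bounded and continuous'' hypotheses on $\mathrm{Success}(\cdot)$ and $\Omega(\cdot)$ are idealizations of a stochastic LLM pipeline.
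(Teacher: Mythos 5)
Your proposal is essentially the same proof as the paper's: strict monotone ascent on $F$ over a finite $\mathcal{L}$ forces termination, the sufficient-exploration property converts ``algorithm halted'' into ``no improving neighbor in $R(\ell^*)$'' and hence local maximality, and the identity $\mathcal{T}_{\mathrm{train}} = \mathcal{T}_{\mathrm{test}}$ transfers this to the test objective for free. Your termination argument (pigeonhole: a strictly increasing $F$ cannot revisit a value on a finite set) is in fact a touch cleaner than the paper's, which first invokes boundedness to get a convergent limit $F^*$ and then separately defines a positive minimum improvement gap $\delta$ over the finite space --- both are correct, but yours avoids the redundant convergence step. You also go further than the paper on the saddle-point claim: the paper simply asserts ``saddle points are excluded,'' whereas you supply the actual reasoning (every admissible move at $\ell^*$ is either strictly uphill, hence contradicted by termination under sufficient exploration, or not admissible at all, so there is no flat escape direction). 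Your closing caveat --- that ``local maximizer'' and ``saddle point'' need to be defined combinatorially via the neighborhood graph induced by $R(\cdot)$, since $\mathcal{L}$ is discrete and $F$ is not differentiable --- is a real and worthwhile observation that the paper glosses over; making that dictionary explicit is what actually licenses borrowing the continuous-optimization vocabulary.
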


\begin{proof}
Every accepted merge or exploration step strictly increases $F(\ell)$; otherwise, the library remains unchanged. Since $F$ is bounded above, the sequence $\{F(\ell_k)\}$ is monotone non-decreasing and bounded, and hence convergent to some limit $F^*$. Furthermore, since $\mathcal{L}$ is finite, define
\[
\delta \;=\; \min \big\{ F(\tilde{\ell}) - F(\ell) : \tilde{\ell} \in R(\ell),\; F(\tilde{\ell}) > F(\ell) \big\}.
\]

Finiteness guarantees $\delta>0$, so only finitely many strict improvements are possible. The algorithm halts at some $\ell^*$. By sufficient exploration, no improving neighbor of $\ell^*$ exists. Therefore, $\ell^*$ is a local maximizer for both training and testing objectives. Saddle points are excluded.

Since the training and testing distributions coincide, the training objective equals the testing objective; thus any local optimality statement directly applies to testing.
\end{proof}
Although the assumption of a finite library is reasonable, we also provide a proof for the case of an infinite library for completeness and rigor.
\begin{theorem}[Infinite compact library case]
Assume $\mathcal{T}_{\mathrm{train}} = \mathcal{T}_{\mathrm{test}}$. 
If the library space $\mathcal{L}$ is compact (closed and bounded) and $F$ is continuous, then the sequence $\{F(\ell_k)\}$ converges, and any subsequential limit point $\ell^\infty$ is a local maximizer for the testing objective. 
Saddle points are excluded for all such limit points.
\label{thm:infinite}
\end{theorem}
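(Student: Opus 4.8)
The plan is to mirror the proof of Theorem~\ref{thm:finite} but replace the finiteness-based gap argument $\delta > 0$ with a compactness-and-continuity argument. First I would observe that, exactly as in the finite case, the accepted merge and exploration steps guarantee that $\{F(\ell_k)\}$ is monotone non-decreasing; since $F$ is continuous on the compact set $\mathcal{L}$, it is bounded above (by the extreme value theorem), so the sequence $\{F(\ell_k)\}$ converges to some limit $F^\infty$. Next, since $\mathcal{L}$ is compact, the sequence $\{\ell_k\}$ has at least one convergent subsequence $\ell_{k_j} \to \ell^\infty \in \mathcal{L}$, and by continuity $F(\ell^\infty) = F^\infty$.

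The core step is to show $\ell^\infty$ is a local maximizer. I would argue by contradiction: suppose there is an improving admissible refinement $\tilde\ell \in R(\ell^\infty)$ with $F(\tilde\ell) > F(\ell^\infty) = F^\infty$. By continuity of $F$ and (an assumed) continuity/robustness of the admissible-refinement correspondence $R(\cdot)$, for $\ell_{k_j}$ sufficiently close to $\ell^\infty$ there is a corresponding admissible refinement $\tilde\ell_j \in R(\ell_{k_j})$ with $F(\tilde\ell_j)$ close to $F(\tilde\ell)$, hence still strictly above $F^\infty \ge F(\ell_{k_j})$ by a fixed positive margin. By the sufficient-exploration assumption, this improving neighbor would eventually be discovered and executed, pushing $F(\ell_{k_{j'}})$ strictly above $F^\infty$ for some later index — contradicting $F(\ell_k) \nearrow F^\infty$. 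Therefore no improving neighbor of $\ell^\infty$ exists, so $\ell^\infty$ is a local maximizer, and the same margin argument rules out saddle points (a saddle would admit an improving direction within $R$). Finally, since $\mathcal{T}_{\mathrm{train}} = \mathcal{T}_{\mathrm{test}}$, the training objective $F$ coincides with the testing objective, so local optimality transfers verbatim; the argument applies to every subsequential limit point because it used only that $\ell^\infty$ is a limit of the iterates.

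The main obstacle I anticipate is making the ``improving neighbor survives under perturbation'' step rigorous: it requires some semicontinuity of the refinement map $R$ (e.g., lower hemicontinuity) together with uniform continuity of $F$ on the compact $\mathcal{L}$, so that the strict improvement gap at $\ell^\infty$ does not collapse to zero along the subsequence. Since the paper explicitly frames this as a ``conceptual sketch'' rather than a strict theorem, I would state the needed continuity of $R$ as a standing regularity assumption (analogous to the ``bounded merging'' and ``sufficient exploration'' conditions already invoked) rather than attempt to derive it from the LLM dynamics, and note that in practice the discreteness of actual libraries makes Theorem~\ref{thm:finite} the operative statement.
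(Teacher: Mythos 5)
Your argument follows the same skeleton as the paper's proof of Theorem~\ref{thm:infinite}: monotone bounded sequence $\{F(\ell_k)\}$ converges, compactness extracts a convergent subsequence $\ell_{k_j}\to\ell^\infty$, continuity gives $F(\ell^\infty)=F^*$, and then a contradiction argument rules out any improving $\tilde\ell\in R(\ell^\infty)$ via sufficient exploration. So the route is essentially the same, and your proposal is correct modulo the extra regularity you flag.

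Where you differ — and improve on the paper — is in the contradiction step. The paper writes ``Suppose $\ell^\infty$ had a neighbor $\tilde\ell\in R(\ell^\infty)$ with $F(\tilde\ell)>F(\ell^\infty)$. Then sufficient exploration would eventually yield $F(\ell_k)>F^*$,'' and stops there. But the sufficient-exploration hypothesis, as stated in the setup, only guarantees discovery of improving neighbors in $R(\ell_k)$ for iterates $\ell_k$ actually visited by the algorithm; $\ell^\infty$ is merely a subsequential limit and is generally never visited. You correctly identify that to transfer the improving neighbor from $\ell^\infty$ back to nearby iterates $\ell_{k_j}$ one needs lower hemicontinuity of the refinement correspondence $R(\cdot)$ together with continuity (hence, on the compact $\mathcal{L}$, uniform continuity) of $F$, so that the strict gap $F(\tilde\ell)-F^*>0$ does not collapse along the subsequence. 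Making that an explicit standing assumption is the right fix; the paper's proof implicitly relies on it without naming it. Your closing observation that the finite-library Theorem~\ref{thm:finite} is the operative statement in practice, while this compact version is more of a conceptual companion, also matches the spirit in which the paper presents these results.
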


\begin{proof}
Each accepted step strictly increases $F(\ell)$, so $\{F(\ell_k)\}$ is monotone non-decreasing. Since $F$ is bounded above, $\{F(\ell_k)\}$ converges to some $F^*$. By compactness of $\mathcal{L}$, there exists a convergent subsequence $\ell_{k_j}\to\ell^\infty$. The continuity of $F$ ensures $F(\ell_{k_j}) \to F(\ell^\infty)=F^*$. Suppose $\ell^\infty$ had a neighbor $\tilde{\ell}\in R(\ell^\infty)$ with $F(\tilde{\ell})>F(\ell^\infty)$. Then sufficient exploration would eventually yield $F(\ell_{k}) > F^*$, which is a contradiction. Therefore, $\ell^\infty$ is a local maximizer. Saddle points are excluded. Since the training and testing distributions coincide, the training objective equals the testing objective; thus any local optimality statement directly applies to testing.
\end{proof}

Theorems~\ref{thm:finite} and~\ref{thm:infinite} together guarantee that, when the training and testing distributions coincide, the refinement algorithm converges to locally optimal solutions for the testing phase.

\section{Experimental Details}

\subsection{Datasets\label{appendix:datasets}}

We list the optimization problem datasets used in this work in Table~\ref{table:datasets}; each pairs a natural-language problem description with its optimal solution, and the Size column reports each dataset's size after cleaning. The gold-standard programs for the training datasets NLP4LP and IndustryOR are obtained from \citet{yang2025orthought}, and the cleaned releases of \citet{astorga2025autoformulation} are available from their \href{https://github.com/LLM4OR/LLM4OR}{GitHub repository}.

\begin{table}[t]
  \centering
  \caption{Statistics of the optimization problem datasets}
  \label{table:datasets}
  \resizebox{\columnwidth}{!}{%
      \begin{tabular}{l l l l}
      \toprule
      \textbf{Dataset}   & \textbf{Size}   & \textbf{Formulation Type(s)}   & \textbf{Completion}   \\
      \midrule
      NL4OPT \citep{NL4OPT}  & 289    & LP   & solution   \\
      NLP4LP \citep{ahmaditeshnizi2024optimus} & 242   & LP, MILP, MINLP & solution, program \\
      MAMO (complex LP) \citep{huang2024mamo}  & 111   & LP   & solution   \\
      IndustryOR \citep{huang2025orlm}   & 82   & LP, IP, MILP, NLP, others   &   solution, program   \\
      OptiBench  \citep{yang2024optibench}   & 403   & LP, MILP, MINLP   & solution   \\
      LogiOR \citep{yang2025orthought} & 92 & LP, IP, MIP, NLP & solution\\ 
      \bottomrule
    \end{tabular}
    }
  \vspace{0.5ex}

  \footnotesize
  Abbreviations:  
  LP – Linear Programming;  
  IP - Integer Programming;
  NLP – Nonlinear Programming;  
  MI – Mixed-Integer;  
  others - Quadratic Programming, Dynamic\&Stochastic Programming, etc.
\end{table}

\subsection{Insights Learned Through Baseline Experience Learning Methods\label{appendix:expel_insights}}

To better understand the limitations of prior experience-learning methods in optimization formulation, we examine all insights extracted by ExpeL when adapted to operations research tasks. The learned insights are dominated by generic, high-level recommendations, listed below:
\begin{itemize}
    \item Ensure that decision variables are defined with the correct type (integer, binary, or continuous) based on the problem requirements, especially for inherently discrete quantities.
    \item Double-check that all constraints are correctly implemented and aligned with the problem conditions, including the correct mathematical representation and direction of inequalities.
    \item Verify that the objective function accurately reflects the intended optimization goal.
    \item Always validate the model output against expected results to ensure feasibility and correctness. 
    \item Regularly review the problem statement to ensure correct interpretation of variable types and constraints. 
    \item Ensure that decision variables have appropriate bounds to reflect realistic limits and prevent infeasible or unbounded solutions. 
    \item Incorporate logical constraints to model conditional relationships between decision variables. 
    \item Test the model under different scenarios to assess robustness and adaptability. 
    \item Re-validate variable types when transitioning between continuous and discrete representations. 
    \item Check the model structure for potential infeasibilities or unboundedness before optimization. 
    \item Consider edge cases to improve robustness across problem conditions. 
    \item Ensure that the objective and constraints scale appropriately with problem size or parameter changes. 
    \item Examine the impact of variable bounds on feasibility and correctness. 
\end{itemize}

These insights emphasize general modeling hygiene rather than concrete modeling or coding practices. They do not encode explicit applicability conditions, domain-specific structural patterns (e.g., flow conservation, inventory balance, fixed-charge activation), or solver-level implementation details. Consequently, while such insights are broadly reusable, they provide limited actionable guidance for correcting specific formulation or implementation errors in complex optimization problems.

\subsection{Generalization Beyond Optimization Formulation\label{appendix:cross_domain}}
Although our main experiments target optimization formulation, \textbf{AlphaOPT}'s learn--diagnose--refine loop is domain-agnostic. To test whether it generalizes beyond toy-scale optimization formulation tasks to broader problem styles, we apply the \emph{same} framework with a lightweight GPT-4o-mini backbone to two benchmarks outside our main scope: \emph{CO-Bench}~\citep{sun2025cobenchbenchmarkinglanguagemodel}, a solver-free Python algorithm-synthesis suite spanning 31 real-world combinatorial-optimization problem types ($\sim$3{,}000 instances, 18-train/13-test), and \emph{HumanEval}~\citep{chen2021evaluatinglargelanguagemodels}, general-purpose code generation (164 tasks, 114-train/50-test).

\begin{table}[!h]
\centering
\small

\caption{Cross-domain transfer of AlphaOPT.}
\label{tab:cross_domain}
\setlength{\tabcolsep}{5pt}
\renewcommand{\arraystretch}{1.1}
\begin{tabular}{lccc}
\toprule
Benchmark & Train & Test & Test $\Delta$ \\
\midrule
CO-Bench  & 69.1 $\rightarrow$ 81.1 & 54.2 $\rightarrow$ 61.2 & +7.0 \\
HumanEval & 35.1 $\rightarrow$ 77.2 & 24.7 $\rightarrow$ 52.0 & +27.3 \\
\bottomrule
\end{tabular}
\end{table}

\begin{table*}[!h]
\centering
\caption{Computation cost and runtime statistics across training and testing stages.}
\label{tab:cost_runtime}
\begin{tabular}{cccccccc}
\hline
Stage & Phase & Data Size & Prompt Tokens & Completion Tokens & Cost per instance & Runtime & Steps \\
\hline
Training & Library Learning & 452 & $\sim$7.7M & $\sim$1.3M & $\sim$\$0.07 & $\sim$40.5 min & 1 \\
Training & Library Diagnosis       & 452 & $\sim$11.5M & $\sim$2.3M & $\sim$\$0.11  & $\sim$48.6 min  & 3-5 \\
Training & Library Refinement      & 452 & $\sim$32.9M & $\sim$1.9M & $\sim$\$0.22 & $\sim$33.7 min  & 3-5 \\
Testing  &  & 691 & $\sim$13.8M & $\sim$1.5M & $\sim$\$0.07  & $\sim$25.5 min  & 3 \\
\hline
\end{tabular}
\end{table*}

On both held-out splits AlphaOPT yields clear gains (+7.0 on CO-Bench and +27.3 on HumanEval, more than doubling the HumanEval baseline), and the learned libraries (28 and 46 insights) organize into the same Algorithm Design and Code Implementation tracks as in OR formulation. This shows the framework transfers beyond optimization formulation to algorithm/heuristic design and general-purpose code generation, supporting its applicability to larger-scale and more diverse tasks.

\subsection{Experimental Cost and Scalability\label{appendix:cost}}

All experiments use OpenAI's GPT-4o via API, and Table~\ref{tab:cost_runtime} reports token consumption across phases, where \emph{Steps} denotes the number of execution rounds per module and the Diagnosis--Refinement loop typically converges within five rounds. Refinement dominates the prompt-token cost, since each insight is refined together with all of its associated tasks as context, whereas testing---only solution generation and insight reuse---is substantially cheaper. The average cost stays around \$0.1 per instance, and this relatively high cost is incurred only once when building the library and amortized over all subsequent reuse, giving the system strong economic scalability.

Crucially, this cost does not grow linearly with dataset size: as the library converges (Figure~\ref{fig:lib_converge}), most new tasks are absorbed by existing insights rather than triggering new learning--diagnosis--refinement, so the marginal cost per task decreases. Two readily deployable optimizations further improve large-scale efficiency: (1)~selecting representative training instances, which lowers training cost without meaningfully diminishing experiential-learning gains; and (2)~gating refinement by a failure threshold, so computation concentrates on the small fraction of still-failing tasks rather than re-refining already-reliable insights.

\subsection{Insight Distribution\label{appendix:lvl2_insights}}

To further understand the detailed content of the library insights, we analyze the distributions of insights under the level-2 taxonomy labels, as well as the contribution differences among training datasets (as depicted in Figure~\ref{fig:lvl2_insights}).

\begin{figure}[!ht]
  \centering
  \begin{minipage}[t]{\linewidth}
    \centering
    \includegraphics[width=1\linewidth]{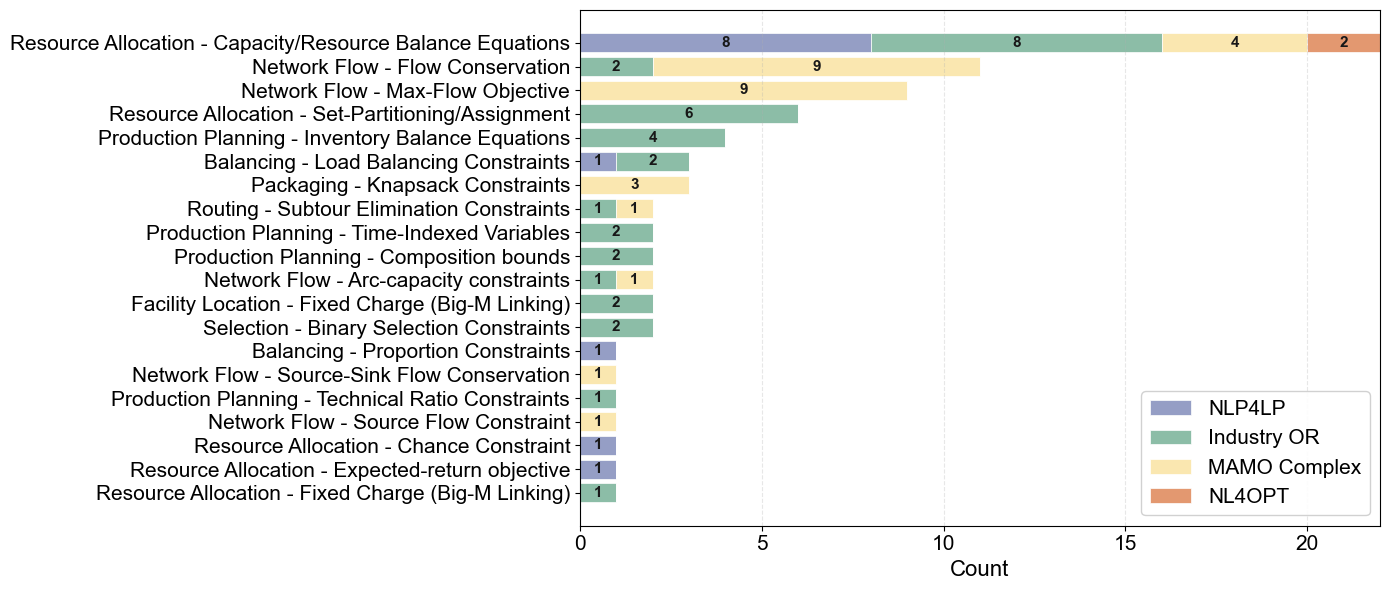}
    \caption*{(a) Insight distribution of Domain Modeling track}
  \end{minipage}
  
  \begin{minipage}[t]{\linewidth}
    \centering
    \includegraphics[width=1\linewidth]{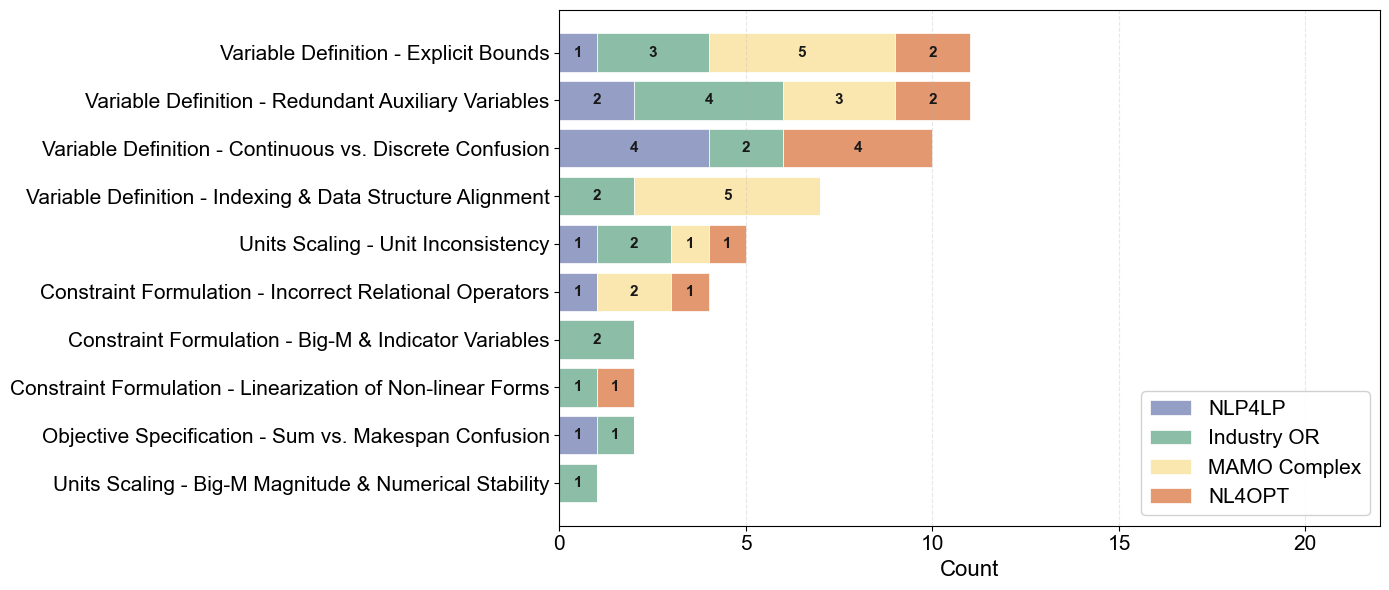}
    \caption*{(b) Insight distribution of General Formulation track}
  \end{minipage}
  
    \begin{minipage}[t]{\linewidth}
    \centering
    \includegraphics[width=1\linewidth]{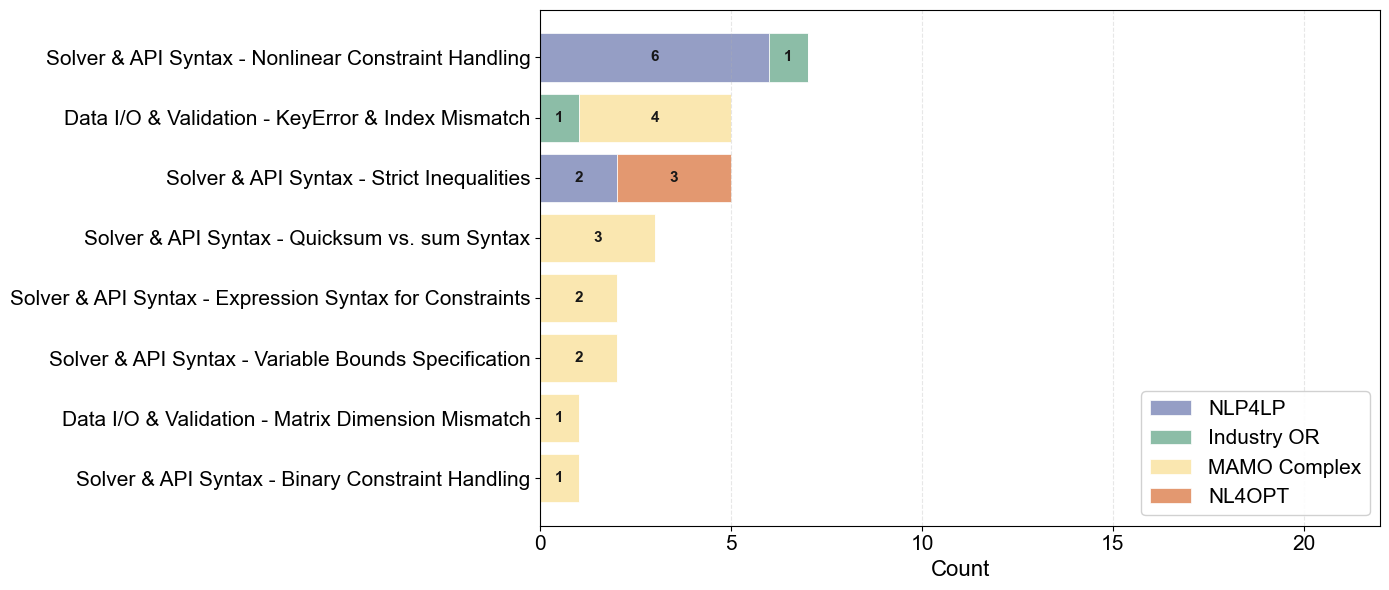}
    \caption*{(c) Insight distribution of Code Implementation track}
  \end{minipage}
  \caption{Insight distributions across three different tracks, showing the contributions of source tasks in the four training datasets to the generation of library insights.}
  \Description{The figure shows how insights in the experience library are distributed across the three main tracks and how different source datasets contribute to each track. Panel (a) focuses on the Domain Modeling track. Each horizontal bar corresponds to a specific domain-level modeling concept, such as capacity and resource balance equations, flow conservation, max-flow objectives, or inventory balance. The bars are segmented by dataset, indicating how many insights of each type originate from NLP4LP, Industry OR, MAMO Complex, and NL4OPT tasks. This panel highlights that domain modeling insights are concentrated around a small number of common structures, especially resource allocation and network flow patterns, with contributions coming from multiple datasets. Panel (b) shows the distribution for the General Formulation track. The listed concepts include variable definition issues, constraint formulation errors, unit scaling problems, and objective specification confusions. Compared to domain modeling, these insights are more evenly spread across formulation-level pitfalls, such as continuous versus discrete variables, redundant auxiliary variables, and Big-M related issues. Contributions again vary across datasets, reflecting differences in how formulation errors arise in different task sources. Panel (c) presents the Code Implementation track. The insights here relate to solver and API usage, constraint syntax, variable bounds, and data input or validation errors. The distribution shows fewer categories but clear concentration on solver syntax and implementation mistakes, with notable contributions from datasets that emphasize executable optimization code. Overall, the figure illustrates that different datasets contribute complementary types of insights and that the learned library captures a diverse range of issues spanning domain modeling, mathematical formulation, and code-level implementation.}
  \label{fig:lvl2_insights}
\end{figure}

In the \emph{Domain Modeling} track, insights under \emph{Resource Allocation – Capacity/Resource Balance Equations} account for the largest proportion, with contributions from all four datasets, among which NLP4LP and Industry OR contribute the most. Insights tagged with this label are widely applicable to optimization problems that require ensuring that resource usage does not exceed available capacity. This demonstrates how to establish constraints that maintain balance between resource consumption and availability.

In the \emph{General Formulation} track, \emph{Variable Definition – Explicit Bounds}, \emph{Redundant Auxiliary Variable}, and \emph{Continuous vs. Discrete Confusion} are the most frequent, with diverse sources, which indicates that these are common issues across multiple datasets. This reflects the fact that foundational concepts in variable definition are the most error-prone in optimization modeling---particularly in balancing variable types, value ranges, and modeling simplifications---while LLMs tend to produce redundant formulations or type misuse due to overlooking structural or physical consistency.

In the \emph{Code Implementation} track, the number of insights is the smallest, with \emph{Solver \& API Syntax – Nonlinear Constraint Handling} and \emph{Data I/O \& Validation – KeyError \& Index Mismatch} accounting for the highest proportions, which are mainly contributed by the MAMO (Complex LP) and NLP4LP datasets. These two types of insights reveal critical vulnerabilities in the implementation stage—solver syntax compatibility and data accessibility---and represent the dual pillars required for bridging mathematical modeling and executable code.

From the perspective of dataset contribution, Industry OR is dominant in \emph{Domain Modeling}; MAMO (Complex LP) and Industry OR lead in \emph{General Formulation}; and both MAMO (Complex LP) and NLP4LP contribute the most to \emph{Code Implementation}. NL4OPT has relatively lower overall participation but focuses on formulation- and solver-related details. Considering dataset size, although the Industry OR and MAMO (Complex LP) datasets used for library learning are only about half the size of the other datasets, they still contribute a large number of insights, which indicates that these datasets contain denser structural modeling challenges and more diverse error patterns; this enables the LLM to accumulate more experiential knowledge across multiple dimensions.

\subsection{Success and Failure Case Study\label{appendix:case_study}}
To assess the retrieved insights, we compare task solving with and without library retrieval across all evaluation datasets (the test splits of NL4OPT, NLP4LP, IndustryOR, and MAMO (ComplexLP), and the out-of-distribution sets LogiOR and OptiBench), excluding tasks that succeed in both settings, and classify each insight's outcome as \emph{success} (it matches a task and prevents an error that would otherwise occur), \emph{failure} (it mismatches a task and introduces a new error), or \emph{invalid} (it matches correctly but fails to fix the original mistake). As shown in Figure~\ref{fig:case_studies}, about half of the retrieved insights help solve new tasks while only a small portion introduce errors, with outcomes varying across level-1 taxonomy labels.

\begin{figure}[!ht]
  \centering
  \begin{minipage}[t]{\linewidth}
    \centering
    \includegraphics[width=0.6\linewidth]{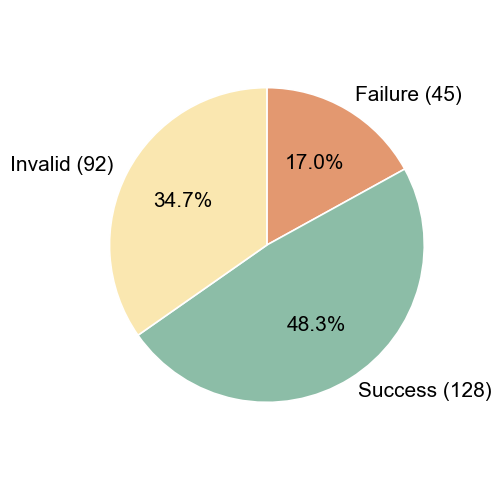}
    \caption*{(a) Overall effectiveness of retrieved library insights}
  \end{minipage}
  \begin{minipage}[t]{\linewidth}
    \centering
    \includegraphics[width=1\linewidth]{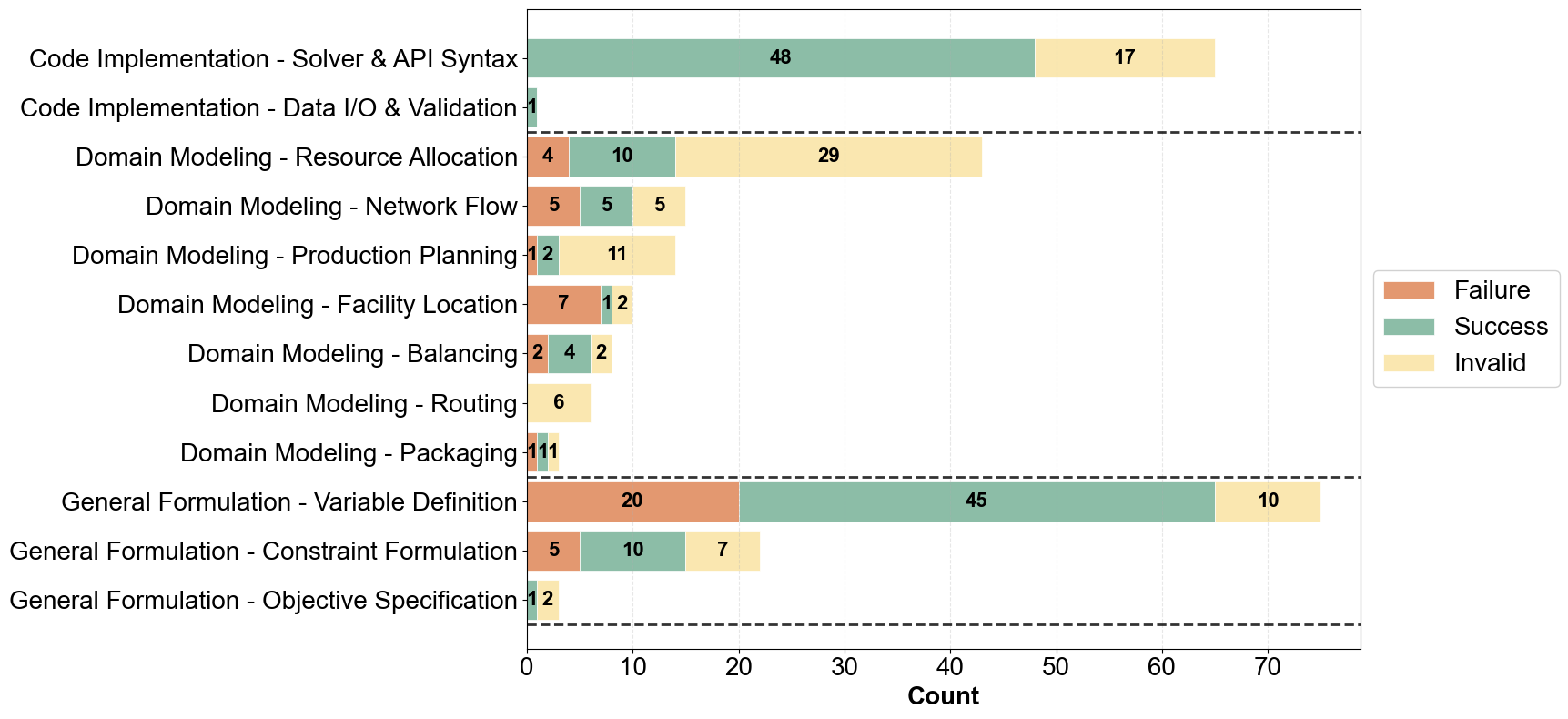}
    \caption*{(b) Effectiveness distribution across Level-1 insight categories}
  \end{minipage}
  \caption{Effectiveness of retrieved insights. (a) Overall distribution of insight outcomes. (b) Proportions of successful, failed, and invalid insights under level-1 taxonomy labels.}
  \Description{The figure evaluates how effective the retrieved library insights are when applied to new tasks. Panel (a) shows the overall outcome distribution for all retrieved insights. Each insight is classified as a success, failure, or invalid. Successful cases are those where the insight matches the task and prevents an error that would otherwise occur. Failed cases are those where the insight mismatches the task and introduces a new error. Invalid cases are those where the insight is applicable but does not help correct the original mistake. The pie chart reports both counts and proportions, showing that nearly half of the retrieved insights lead to successful corrections, while the rest are split between invalid and failed outcomes. Panel (b) breaks down these outcomes by level-1 insight categories. Each horizontal bar corresponds to a category such as domain modeling, general formulation, or code implementation, and is further divided into success, failure, and invalid portions. This view highlights differences in effectiveness across categories. For example, variable definition and solver or API syntax insights account for a large number of successful cases, while some domain modeling categories show higher proportions of invalid outcomes. Overall, the figure shows that retrieved insights are often helpful, with some variation between categories.}
  \label{fig:case_studies}
\end{figure}

\paragraph{1) Insights with high success rates}
These insights are concentrated in \emph{Code Implementation – Solver \& API Syntax} and \emph{General Formulation – Variable Definition}. These insights typically target code implementation or fundamental modeling errors with clear structures, which enable the LLM to follow their guidance stably and correctly. For instance, insights tagged with the \emph{Strict Inequality} label under \emph{Solver \& API Syntax} highlight the fact that solvers (e.g., Gurobi) do not support strict inequalities and should instead be reformulated as non-strict forms ($\leq -1$ or $\geq +1$); insights tagged with the \emph{Explicit Bounds} label under \emph{Variable Definition} emphasize that decision variables should be assigned explicit upper and lower bounds to ensure feasibility and improve solver efficiency.

\paragraph{2) Insights with high failure rates}
These insights are mainly found in \emph{Domain Modeling – Facility Location}. They often involve structural constraints and logical triggers that are easily misinterpreted or overgeneralized by the LLM. For example, insights tagged with the \emph{Fixed Charge (Big-M Linking)} label exhibit a failure rate of 70\%. Although the principle of using Big-M constraints to model on/off logic is correct, its blind application in problems without conditional activation can lead to redundant or overlapping constraints and unnecessary feasible-region reduction, which ultimately causes solver failure. Also, while insights under \emph{Variable Definition} generally achieve high success rates, some insights tagged with \emph{Explicit Bounds} label sometimes are overly rigid, which leads the LLM to impose unnecessary upper bounds and thereby restricts the feasible space and produces suboptimal solutions.

\paragraph{3) Insights with high invalid rates}
These insights are mainly under \emph{Domain Modeling – Resource Allocation} and \emph{Domain Modeling – Production Planning}. Although the LLM successfully retrieves the correct insights and identifies the corresponding problem types, it often fails to translate them into executable formulas or solver-level implementations. For instance, under \emph{Solver \& API Syntax}, \emph{Nonlinear Constraint Handling} advises linearization or the introduction of auxiliary variables for nonlinear objectives or constraints (e.g., ratios or divisions), yet the LLM frequently fails to fully execute these transformations (by neglecting auxiliary variables or mis-rewriting proportional constraints), which results in insights being recognized but not operationalized.

Also, certain tasks remained unsolved regardless of whether library retrieval was enabled, because their failure stemmed from factors beyond the current learned library's knowledge scope. In the out-of-distribution dataset LogiOR, tasks involved multi-level spatiotemporal logic and interacting constraints (e.g., capacity, timing, and flow balance) in problems such as routing, scheduling, and network flow. These challenges extend beyond the scope of the existing library, which primarily focuses on static, linear formulations. Although related taxonomy labels such as \emph{Resource Allocation} and \emph{Nonlinear Constraint Handling} exist, their granularity and depth are insufficient for modeling such complex logic. This demonstrates that the current system, while semantically generalizable, still lacks robust cross-structural transfer and context adaptation capabilities.

Overall, while successful insights constitute the majority, the results reveal several directions for improvement. First, the relatively high failure rates indicate that, despite the inclusion of condition refinement, retrieval precision can still be improved through enhanced semantic disambiguation and structural filtering. Second, insights with high invalid proportions suggest the need for clearer explanations and better-designed examples to improve pedagogical clarity and execution effectiveness. Finally, for out-of-distribution tasks, future efforts should focus on strengthening the LLM's ability to adapt and generalize retrieved insights to unseen, complex optimization scenarios. Moreover, expanding OR datasets based on LLM error typologies can further enhance experiential learning efficiency and generalization at comparable problem scales.

\subsection{Specification of Library Taxonomy Labels\label{appendix:taxonomy}}

The following Table ~\ref{tab:library_taxonomy} lists all library taxonomy labels and their corresponding conditions, which specify the applicability criteria of each level-2 label and clarify its precise meaning. According to the library taxonomy generation mechanism, each label condition is created by the LLM when the label is first introduced, and subsequent insights generated under the same label inherit that initial condition.

\begin{table*}[!t]
\centering
\caption{Full Specification of the Library Taxonomy}
\label{tab:library_taxonomy}
\small
\renewcommand{\arraystretch}{1.12}
\setlength{\tabcolsep}{3pt}
\begin{tabularx}{\textwidth}{l X}
\toprule
\textbf{Taxonomy Label} & \textbf{Condition} \\
\midrule

\multicolumn{2}{c}{\textbf{Domain Modeling}}\\
\addlinespace[3pt]

\multicolumn{2}{l}{\textbf{Resource Allocation}}\\
Capacity/Resource Balance Equations & Applies when the problem domain requires resources to move or transform through nodes and local conservation must hold. \\
Set-Partitioning/Assignment & Applies when the problem description requires each item or task to be exclusively assigned to exactly one choice among many. \\
Fixed Charge (Big-M Linking) & Applies when the problem description requires a facility, option, or mode to be activated by a binary choice. \\
Chance Constraints & Applies when the problem description sets a limit on the average chance of an adverse outcome across options or scenarios (e.g., stake, volume, or weight). \\
Expected-return Objective & Applies when the problem description calls for maximizing the average/expected payout or return across options given their win/lose probabilities. \\
\addlinespace[2pt]

\multicolumn{2}{l}{\textbf{Network Flow}}\\
Flow Conservation & Applies when the problem description involves quantities traversing a directed network and nodal balance must be maintained. \\
Max-Flow Objective & Applies when the problem description requires maximizing throughput between designated source and sink nodes. \\
Source Flow Constraint & Applies when the problem description designates a source node that distributes resources through a network to sinks and requires explicit conservation at the source. \\
Source-Sink Flow Conservation & Applies when the problem description specifies a source and a sink and requires routing/transferring flow between them with explicit balance at those terminal nodes. \\
Arc-Capacity Constraints & Applies when the problem domain contains edges with maximum throughput or capacity limits. \\
\addlinespace[2pt]

\multicolumn{2}{l}{\textbf{Production Planning}}\\
Inventory Balance Equations & Applies when the problem description involves materials or products that carry over between periods and must satisfy the stock-flow balance. \\
Technical Ratio Constraints & Applies when the problem description specifies minimum/maximum production ratios or recipe proportions between products or stages. \\
Time-Indexed Variables & Applies when the problem domain requires discrete time modeling to capture capacities, setups, or carry-over decisions. \\
Composition Bounds & Applies when the problem description specifies multiple products sharing limited resources (e.g., machine hours or labor) that require explicit per-resource capacity limits. \\
\addlinespace[2pt]

\multicolumn{2}{l}{\textbf{Balancing}}\\
Load Balancing Constraints & Applies when the problem description requires fairness or controls maximum imbalance across parallel resources. \\
Proportion Constraints & Applies when the problem description limits the maximum or minimum proportion of a resource, flow, or activity relative to the total. \\
\addlinespace[2pt]

\multicolumn{2}{l}{\textbf{Packaging}}\\
Knapsack Constraints & Applies when the problem domain requires selecting items that consume a single scalar capacity such as weight, volume, or budget. \\
\addlinespace[2pt]

\multicolumn{2}{l}{\textbf{Facility Location}}\\
Fixed Charge (Big-M Linking) & Applies when the problem description specifies that service or flow is allowed only if a facility is opened. \\
\addlinespace[2pt]

\multicolumn{2}{l}{\textbf{Selection}}\\
Binary Selection Constraints & Applies when the problem domain requires choosing a subset under count, budget, or compatibility limits. \\
\addlinespace[2pt]

\multicolumn{2}{l}{\textbf{Routing}}\\
Subtour Elimination Constraints & Applies when the problem description allows decision variables to form disconnected cycles that must be eliminated. \\

\bottomrule
\end{tabularx}
\end{table*}

\begin{table*}[!t]
\centering
\label{tab:taxonomy_library}
\small
\renewcommand{\arraystretch}{1.12}
\setlength{\tabcolsep}{3pt}
\begin{tabularx}{\textwidth}{l X}
\toprule
\textbf{Taxonomy Label} & \textbf{Condition} \\
\midrule

\multicolumn{2}{c}{\textbf{General Formulation}}\\
\addlinespace[3pt]

\multicolumn{2}{l}{\textbf{Variable Definition}}\\
Continuous vs. Discrete Confusion & Applies when decision quantities represent indivisible counts or choices versus divisible amounts such as flows. \\
Explicit Bounds & Applies when the problem description provides natural physical, economic, or logical limits that can tightly bound decision variables. \\
Indexing \& Data Structure Alignment & Applies when variables are indexed over sets or dictionaries that must align with the keys of the provided data. \\
Redundant Auxiliary Variables & Applies when auxiliary variables merely re-express existing linear combinations without adding modeling value. \\
\addlinespace[2pt]

\multicolumn{2}{l}{\textbf{Constraint Formulation}}\\
Incorrect Relational Operators & Applies when natural-language statements such as “at most” or “at least” must be translated into algebraic inequalities. \\
Linearization of Non-linear Forms & Applies when nonlinear relations among variables reduce tractability or solver performance. \\
Big-M \& Indicator Variables & Applies when constraints depend on logical on/off conditions controlled by binary variables. \\
\addlinespace[2pt]

\multicolumn{2}{l}{\textbf{Objective Specification}}\\
Sum vs. Makespan Confusion & Applies when multiple resources or activities can run in parallel and the objective is ambiguous between total completion time and makespan. \\
\addlinespace[2pt]

\multicolumn{2}{l}{\textbf{Units Scaling}}\\
Unit Inconsistency & Applies when input data come from different unit systems or incompatible measurement scales. \\
Big-M Magnitude \& Numerical Stability & Applies when the problem description uses Big-M to model on/off or conditional constraints and realistic bounds can be derived to calibrate M. \\
\midrule

\multicolumn{2}{c}{\textbf{Code Implementation}}\\
\addlinespace[3pt]

\multicolumn{2}{l}{\textbf{Solver \& API Syntax}}\\
Quicksum vs. sum Syntax & Applies when the mathematical model contains linear expressions aggregated over large index sets that should be constructed using solver-native summation operators. \\
Strict Inequalities & Applies when the mathematical model contains strict inequality relations between variables that cannot be directly handled by LP/MIP solvers. \\
Nonlinear Constraint Handling & Applies when the problem description introduces nonlinear relationships (e.g., proportions or multiplicative effects) that must be enforced in an LP/MIP model. \\
Binary Constraint Handling & Applies when the problem description involves yes/no (open/close, select/not-select) decisions that require variables restricted to $\{0,1\}$, without adding extra [0,1] constraints. \\
Expression Syntax for Constraints & Applies when the problem description specifies equality/inequality relations (e.g., balances, conservation, on/off logic) that should be encoded directly as solver expressions. \\
Variable Bounds Specification & Applies when the problem description requires the change of a variable from one value to another. \\
\addlinespace[2pt]

\multicolumn{2}{l}{\textbf{Data I/O \& Validation}}\\
KeyError \& Index Mismatch & Applies when the mathematical model contains indexed variables or parameters that are accessed with indices not present in the corresponding data structures. \\
Missing Data Defaults & Applies when the mathematical model contains optional parameters whose values may be absent in the dataset and require default assignments to preserve model validity. \\

\bottomrule
\end{tabularx}
\end{table*}

\FloatBarrier
\onecolumn
\section{Prompts For LLM Modules\label{appendix:prompts}}

\promptboxfromfile{Apply self-exploration on finding gold-standard program}{prompts/Self_Explore.txt}

\promptboxfromfile{Generate library insights}{prompts/Generate_Insights.txt}

\promptboxfromfile{Retrieve insights by matching taxonomy}{prompts/Retri_Label.txt}

\promptboxfromfile{Retrieve insights by condition applicability}{prompts/Retri_Condition.txt}

\promptboxfromfile{Diagnose issues for failed program}{prompts/Diagnose_Issues.txt}

\promptboxfromfile{Diagnose positive and negative tasks of an insight}{prompts/Diagnose_Pos_Neg.txt}

\promptboxfromfile{Diagnose unretrieved tasks of an insight}{prompts/Diagnose_Unretri.txt}

\promptboxfromfile{Refine Insight Conditions}{prompts/Ins_Refinement.txt}

\balance
\end{document}